\newcolumntype{H}{>{\setbox0=\hbox\bgroup}c<{\egroup}@{}}
\DeclareMathOperator{\BAR}{BAR}
\DeclareMathOperator{\BAP}{BAC}
\DeclareMathOperator{\meBAR}{meBAR}
\DeclareMathOperator{\meBSR}{meBBR}
\DeclareMathOperator{\BSR}{BBR}
\newcommand*{\addFileDependency}[1]{% argument=file name and extension
	\typeout{(#1)}
	\@addtofilelist{#1}
	\IfFileExists{#1}{}{\typeout{No file #1.}}
}
\title[Automatic algorithm change policies]{Approval policies for modifications to Machine Learning-Based Software as a Medical Device: A study of bio-creep}
\author{Jean Feng$^*$\email{jeanfeng@uw.edu},
Scott Emerson$^{**}$\email{semerson@uw.edu}, and
Noah Simon$^{***}$\email{nrsimon@uw.edu} \\
Department of Biostatistics, University of Washington, Seattle, WA, USA}
\begin{document}

%  This will produce the submission and review information that appears
%  right after the reference section.  Of course, it will be unknown when
%  you submit your paper, so you can either leave this out or put in
%  sample dates (these will have no effect on the fate of your paper in the
%  review process!)

\date{}

%  These options will count the number of pages and provide volume
%  and date information in the upper left hand corner of the top of the
%  first page as in published papers.  The \pagerange command will only
%  work if you place the command \label{firstpage} near the beginning
%  of the document and \label{lastpage} at the end of the document, as we
%  have done in this template.

%  Again, putting a volume number and date is for your own amusement and
%  has no bearing on what actually happens to your paper!

\pagerange{\pageref{firstpage}--\pageref{lastpage}}
\volume{NA}
\pubyear{NA}
\artmonth{NA}
%\volume{64}
%\pubyear{2008}
%\artmonth{December}

%  The \doi command is where the DOI for your paper would be placed should it
%  be published.  Again, if you make one up and stick it here, it means
%  nothing!

\doi{}

%  This label and the label ``lastpage'' are used by the \pagerange
%  command above to give the page range for the article.  You may have
%  to process the document twice to get this to match up with what you
%  expect.  When using the referee option, this will not count the pages
%  with tables and figures.

\label{firstpage}

%  put the summary for your paper here

\begin{abstract}
Successful deployment of machine learning algorithms in healthcare
requires careful assessments of their performance and safety. To date,
the FDA approves locked algorithms prior to marketing and requires
future updates to undergo separate premarket reviews. However, this
negates a key feature of machine learning--the ability to learn from a
growing dataset and improve over time. This paper frames the design of
an approval policy, which we refer to as an automatic algorithmic change
protocol (aACP), as an online hypothesis testing problem. As this process
has obvious analogy with noninferiority testing of new drugs, we
investigate how repeated testing and adoption of modifications might
lead to gradual deterioration in prediction accuracy, also known as ``biocreep'' in the drug development literature. We consider simple policies
that one might consider but do not necessarily offer any error-rate
guarantees, as well as policies that do provide error-rate control. For the
latter, we define two online error-rates appropriate for this context: Bad
Approval Count (BAC) and Bad Approval and Benchmark Ratios (BABR).
We control these rates in the simple setting of a constant population and
data source using policies aACP-BAC and aACP-BABR, which combine
alpha-investing, group-sequential, and gate-keeping methods. In
simulation studies, bio-creep regularly occurred when using policies with
no error-rate guarantees, whereas aACP-BAC and -BABR controlled the
rate of bio-creep without substantially impacting our ability to approve
beneficial modifications.
\end{abstract}

%  Please place your key words in alphabetical order, separated
%  by semicolons, with the first letter of the first word capitalized,
%  and a period at the end of the list.
%

\begin{keywords}
AI/ML-based SaMD; Alpha-investing; Gate-keeping; Group-sequential; Online hypothesis testing.
\end{keywords}

%  As usual, the \maketitle command creates the title and author/affiliations
%  display

\maketitle

%  If you are using the referee option, a new page, numbered page 1, will
%  start after the summary and keywords.  The page numbers thus count the
%  number of pages of your manuscript in the preferred submission style.
%  Remember, ``Normally, regular papers exceeding 25 pages and Reader Reaction
%  papers exceeding 12 pages in (the preferred style) will be returned to
%  the authors without review. The page limit includes acknowledgements,
%  references, and appendices, but not tables and figures. The page count does
%  not include the title page and abstract. A maximum of six (6) tables or
%  figures combined is often required.''

%  You may now place the substance of your manuscript here.  Please use
%  the \section, \subsection, etc commands as described in the user guide.
%  Please use \label and \ref commands to cross-reference sections, equations,
%  tables, figures, etc.
%
%  Please DO NOT attempt to reformat the style of equation numbering!
%  For that matter, please do not attempt to redefine anything!

Due to the rapid development of artificial intelligence (AI) and machine learning (ML), the use of AI/ML-based algorithms has expanded in the medical field.
As such, an increasing number of AI/ML-based Software as a Medical Device (SaMD) are seeking approval from the Center of Diagnostics and Radiologic Health (CDRH) at the US Food and Drug Administration (FDA).
ML algorithms are attractive for their ability to improve over time by training over a growing body of data.
Thus, rather than using a locked algorithm trained on a limited dataset, developers might like to train it further on a much more representative sample of the patient population that can only be obtained after deployment.
To collect input on this regulatory problem, the FDA recently outlined a proposed regulatory framework for modifications to AI/ML-based SaMDs in a discussion paper \citep{Fda2019-kt}.

Regulating evolving algorithms presents new challenges because the CDRH has historically only approved ``locked'' algorithms, i.e. algorithms that do not change after they are approved.
This is a new regulatory problem because updating traditional medical devices and drugs is often logistically difficult whereas updating software is both fast and easy.
%Moreover, ML algorithms have the potential to improve over time by training over a growing body of data.
%So rather than using a locked algorithm that is trained on a limited dataset, developers might like to train it further on a much more representative sample of the patient population that can only be obtained after deployment.
%In the interest of advancing public health, the FDA is interested in developing policies that approve beneficial modifications in a timely manner.

\citet{Fda2019-kt} proposes companies stipulate SaMD Pre-specifications (SPS) and an Algorithm Change Protocol (ACP).
When listing the anticipated modifications in the SPS, it behooves the company to cast as wide a net as possible within FDA-imposed constraints.
The ACP specifies how the company will ensure that their modifications are acceptable for deployment.
Once the FDA approves the SPS and ACP, the company follows these pre-specified procedures to deploy changes without further intervention.
As such, we refer to the ACP in this paper as an ``automatic ACP'' (aACP).
The aACP is the FDA's primary tool for ensuring safety and efficacy of the modifications.
However, specific aACP designs or requirements are noticeably absent from \citet{Fda2019-kt}.
This paper aims to address this gap.

A manufacturer has two potential motivations for changing an AI/ML-based SaMD: to advance public health and to increase their financial wealth.
Modifications that improve performance and usability are encouraged.
On the other hand, changes that do not and are deployed only for the sake of change itself have been used in the past to advance a manufacturer's financial interest and are contrary to public interest.
Historically, such modifications have been used to 1) decrease competition because it is difficult for competitors to compare against an ever-changing benchmark; 2) file for a patent extension and keep prices artificially high; and 3) increase sales for a supposedly new and improved product \citep{Gupta2010-wf, Hitchings2012-km, Gottlieb2019-dl}.
To prevent this type of behavior with drugs and biologics, the FDA regulates modifications through various types of bridging studies (\cite{ICH1998-so}). %ICH E5
Likewise, an aACP should only grant approval to modifications to AI/ML-based SaMD after ensuring safety and efficacy.

This paper provides a framework for designing and evaluating an aACP, considers a variety of aACP designs, and investigates their operating characteristics.
We assume the manufacturer is allowed to propose arbitrary (and possibly deleterious) modifications, which include changes to model parameters, structure, and input features.
For this manuscript, we focus on the setting of a constant population and data source, rather than more complicated settings with significant time trends. % in patient populations, diagnostic methods, etc.
Throughout, we evaluate modifications solely in terms of their operating characteristics.
Thus, the aACPs treat simple models and complex black-box estimators, such as neural networks and boosted gradient trees, the same.
This parallels the drug approval process, which primarily evaluates drugs on their efficacy and safety with respect to some endpoints, even if the biological mechanism is not completely understood.

To our knowledge, there is no prior work that directly addresses the problem of regulating modifications to AI/ML-based SaMD, though many have studied related problems.
In online hypothesis testing, alpha-investing procedures are used to control the online false discovery rate (FDR) \citep{Foster2008-ek, Javanmard2015-az, Ramdas2017-un, Ramdas2018-gh, Zrnic2018-ab}, which is important for companies that test many hypotheses over a long period of time (\citet{Tang2010-wc}).
We will consider aACPs that use alpha-investing to control online error rates; However, we will need to significantly adapt these ideas for use in our context.
In addition, differential privacy methods \citep{Blum2015-hv, Dwork2015-da} have been used to tackle the problem of ranking model submissions to a ML competition, where the submissions are evaluated using the same test data and models are submitted in a sequential and adaptive manner.
Though that problem is related, those approaches cannot evaluate modifications that add previously-unmeasured covariates.
Finally, online learning methods are a major motivation for studying this regulatory problem and can be used to automatically update the model \citep{Shalev-Shwartz2012-vg}.
However, rather than designing bespoke aACPs for online learning methods, we will consider approval policies for arbitrary modifications as a first step.

This paper evaluates the rates at which different policies make bad approvals as well as their rates of approving beneficial modifications.
Due to the analogy between this problem and noninferiority testing of new drugs, we investigate how repeated testing of proposed modifications might lead to gradual deterioration in model performance, also known as ``bio-creep'' \citep{Fleming2008-ok}.
We compare simple aACPs that one might consider, but do not necessarily have error-rate guarantees, to policies that \textit{do} provide error rate control.
For the latter, we define two online error rates appropriate for this context---the expected Bad Approval Count (BAC) and Bad Approval and Benchmark Ratios (BABR)---and control them using policies aACP-BAC and aACP-BABR, respectively.
In simulation studies, bio-creep frequently occurred when using the simple aACPs.
By using aACP-BAC or -BABR instead, we significantly reduce the risk of bio-creep without substantially reducing our power at detecting beneficial modifications.
Based on these findings, we conclude that 1) bio-creep is a major concern when designing an aACP and 2) there are promising solutions for mitigating it without large sacrifices in power.

\section{Motivating examples}
\label{sec:examples}

We present examples of actual AI/ML-based medical devices and discuss possible modifications that manufacturers might consider.
The examples are ordered by increasing regulatory complexity and risk.
Throughout, we only discuss regulating modifications to the software and assume the intended use of the device remains constant.

\subsection{Blood tests using computer vision}
\label{sec:blood}
Sight Diagnostics has developed a device that collects and images blood samples to estimate complete blood count (CBC) parameters.
They are evaluating the device in a clinical trial (ClinicalTrials.gov ID NCT03595501) where the endpoints are the estimated linear regression parameters (slope and intercept) between their CBC parameter estimates and gold standard.

The FDA requires locking the entire procedure, which includes blood collection, imaging, and the ML algorithm, prior to marketing.
Nonetheless, the company might want to improve the accuracy of their test after obtaining regulatory approval.
For instance, they can train more complex models that capture nonlinearities and interactions (between covariates and/or outcomes) or use a different FDA-approved device to image the blood sample.
All these changes have the potential to improve prediction accuracy, though it is not guaranteed.

To regulate such modifications, we will need to define acceptable changes to endpoint values.
This is not straightforward when multiple endpoints are involved:
Do all endpoints have to improve?
What if the model has near-perfect performance with respect to some endpoints and room for improvement for others?
To tackle these questions, we must run both superiority and non-inferiority (NI) tests.
Moreover, introducing NI tests prompts even more questions, such as how to choose an appropriate NI margin.

%Moreover, modifications are proposed in a sequential and adaptive manner, which can depend on the outputs from an aACP.
%An aACP that aims to control the online error rate must account for this adaptivity in order to avoid inflated error rates in adversarial setups.

%The gold standard can be measured accurately and quickly without major ethical issues.
%Also, the risk of this device is also relatively low since one can always use a gold-standard procedure instead.

%\url{https://techcrunch.com/2018/07/12/sight-diagnostics-launches-an-ai-based-diagnostics-device-for-faster-blood-tests/}
%\url{https://en.globes.co.il/en/article-sight-diagnostics-successful-in-clinical-trial-1001281957}
%\url{https://clinicaltrials.gov/ct2/show/NCT03595501}

\subsection{Detecting large vessel occlusion from CT angiogram images of the brain}
\label{sec:cta}
ContaCT is a SaMD that identifies whether CT angiogram images of the brain contain a suspected large vessel occlusion.
If so, it notifies a medical specialist to intervene.
The manufacturer evaluated ContaCT using images analyzed by neuro-radiologists.
The primary endpoints were estimated sensitivity and specificity.
The secondary endpoint was the difference in notification time between ContaCT and standard-of-care.
ContaCT achieved 87\% sensitivity and 89\% specificity and significantly shortened notification time.

Having obtained FDA approval \citep{FDA2018-dq}, the company might want to improve ContaCT by, say, training on more images, extracting a different set of image features, or utilizing clinical covariates from electronic health records.
This last modification type requires special consideration since the distribution of clinical covariates and their missingness distribution are susceptible to time trends.

%\url{https://www.accessdata.fda.gov/cdrh_docs/reviews/DEN170073.pdf}

\subsection{Blood test for cancer risk prediction}
\label{sec:grail}
GRAIL is designing a blood test that sequences cell-free nucleic acids (cfNAs) circulating in the blood to detect cancer early.
They are currently evaluating this test in an observational study (ClinicalTrials.gov ID NCT02889978) where the gold standard is a cancer diagnosis from the doctor within 30 months.
For time-varying outcomes, one may consider evaluating performance using time-dependent endpoints, such as those in \citet{Heagerty2005-ui}.

After the blood test is approved, GRAIL might still want to change their prediction algorithm.
For example, they could collect additional omics measurements, sequence the cfNAs at a different depth (e.g. lower to decrease costs, higher to improve accuracy), or train the model on more data.
Regulating modifications to this blood test is particularly difficult because the gold standard might not be observable in all patients, its definition can vary between doctors, and it cannot be measured instantaneously.
In fact, the gold standard might not be measurable at all because test results will likely affect patient and doctor behavior.

\section{Problem Setup}

In this section, we provide a general framework and abstractions to understand the approval process for modifications to AI/ML-based SaMD.
We begin with reviewing the approval process for a single AI/ML-based SaMD since it forms the basis of our understanding and is a prerequisite to getting modifications approved.

\subsection{AI/ML-based SaMD}

Formally, the FDA defines SaMD as software intended to be used for one or more medical purposes without being part of a hardware medical device.
An AI/ML-system is software that learns to perform a specific task by tracking performance measures.
The FDA approves a SaMD for a specific indication, which describes the population, disease, and intended use.
We only focus on SaMDs intended to be inform and are approved based on predictive accuracy, not those that prescribe treatment and are evaluated based on patient outcomes.
%This paper will only discuss regulating modifications to AI/ML-based SaMD that are evaluated based on predictive accuracy.
%We will not discuss modifications to SaMD that are assessed based on patient outcomes, which are common in treatment-coupled biomarkers.
%There are additional complications in this setting since counterfactual outcomes are typically unobserved.

Predictive accuracy is typically characterized by multiple endpoints, or co-primary endpoints \citep{Offen2007-dq, Fda2017-vs}.
The most common endpoints for binary classifiers are sensitivity and specificity because they tend to be independent of disease prevalence, which can vary across subpopulations \citep{Pepe2003-ri}.
Additionally, we can evaluate endpoints over different subgroups to guarantee a minimum level of accuracy for each one.
%Finally, multiple endpoints can capture the different aspects of the SaMD, such as its safety and efficacy.

%In general, we recommend using endpoints that are independent of prevalence since prevalence can differ across subpopulations. So for binary classification, we recommend using sensitivity and specificity rather than misclassification rates or positive predictive value.
%For binary outcomes, sensitivity and specificity are popular choices because they are independent of disease prevalence \citep{Pepe2003-ri}.
%Analogously, time-dependent sensitivity and specificity can be used for survival outcomes \citep{Heagerty2005-ui}.
%In order to ensure a minimum level of accuracy across different subpopulations, one might also consider endpoints evaluated over each subpopulation.

We now define a model developer (the manufacturer) in mathematical terms.
Let $\mathcal{X}$ be the support of the targeted patient population, where a patient is represented by their covariate measurements.
Let $\mathcal{Y}$ be output range (possibly multivariate).
Let $\mathcal{Q}$ be a family of prediction models $f:\mathcal{X} \mapsto \mathcal{Y}$.
Each model $f$ defines the entire pipeline for calculating the SaMD output, including feature extraction, pre-processing steps, and how missing data is handled.
The model developer is a functional $g$ that maps the training data $(X_T, Y_T) \in \mathcal{X}^n \times \mathcal{Y}^n$ to a function in $\mathcal{Q}$.
Let $\mathcal{P}$ be the family of distributions for $X \times Y$.
The performance of a model $f$ on population $\mathbb{P} \in \mathcal{P}$ is quantified by the $K$-dimensional endpoint $m: \mathcal{Q} \times \mathcal{P} \mapsto \mathbb{R}^{K}$.
For each endpoint $m_k$, we assume that a larger value indicates better performance.

\subsection{Modifications to AI/ML-based SaMD}
The proposed workflow in \citet{Fda2019-kt} for modifying an AI/ML-based SaMD iterates between three stages. % (Figure~\ref{fig:fda_tplc}).
First, the manufacturer proposes a modification by training on monitoring and/or external data and adds this to a pool of proposed modifications.
Second, the aACP evaluates each candidate modification and grants approval to those satisfying some criteria.
The most recently approved version is then recommended to doctors and patients.
Finally, monitoring data is collected, which can be used to evaluate and train future models.

Within this workflow, the model developer acts in a sequential and possibly adaptive manner.
For simplicity, consider a fixed grid of time points $t = 1,2,...$.
Since we allow arbitrary modifications, we treat each modification as an entirely separate model.
At each time point, the model developer proposes a new model and adds it to the pool of candidates.
(For example, they may submit a new model trained on monitoring data obtained  at the end of each month.)
Let filtration $\mathcal{F}_t$ be the sigma algebra representing the information up to time $t$, which includes observed monitoring data, proposed models, and aACP outputs up to time $t$.
The model developer is a sequence of functionals $\{g_t: t = 1,2,...\}$, where $g_t$ is a $\mathcal{F}_t$-measurable functional mapping to $\mathcal{Q}$.
Let $\hat{f}_t$ be the realized model proposal at time $t$.
In addition, suppose that each proposed model $\hat{f}_t$ has a maximum wait time $\Delta_t$ that specifies how long the manufacturer will wait for approval of this model, i.e. the model is no longer considered for approval after time $t + \Delta_t$.
%The maximum wait time reflects our assumption that the manufacturer is uninterested in getting approval for old unapproved modifications.

Time trends are likely to occur in long-running processes, as found in long-running clinical trials and non-inferiority trials \citep{Altman1988-ax, Fleming2008-ok}.
This includes changes to any component of the joint distribution between the patient population and the outcome, such as the marginal distributions of the covariates, their correlation structure, their prognostic values, and the prevalence of the condition.
As such, let the joint distribution at time $t$ of patients $X_t$ and outcomes $Y_t$ be denoted $\mathbb{P}_t$.
The value of endpoint $m$ for model $f$ at time $t$ is then $m(f, \mathbb{P}_{t})$.
More generally, we might characterize a model at time $t$ by the average endpoint value over the previous $D$ time points, as evaluated by $m(f, \mathbb{P}_{t:t + D - 1})$ where $\mathbb{P}_{t:t'}$ indicates a uniform mixture of $\mathbb{P}_t, ..., \mathbb{P}_{t'}$.
Here $D$ acts as a smoothing parameter; Larger $D$ increases the smoothness of endpoint values.

Finally, this paper assumes that monitoring data collected at time $t$ are representative of the current population $\mathbb{P}_t$.
Of course, satisfying this criteria is itself a complex issue.
We will not discuss the challenges here and instead refer the reader to \citet{Pepe2003-ri} for more details, such as selecting an appropriate sampling scheme, measuring positive versus negative examples, and obtaining gold standard versus noisy labels.

\subsubsection{Defining acceptable modifications}

\begin{figure}
	\centering
	\includegraphics[width=0.75\linewidth]{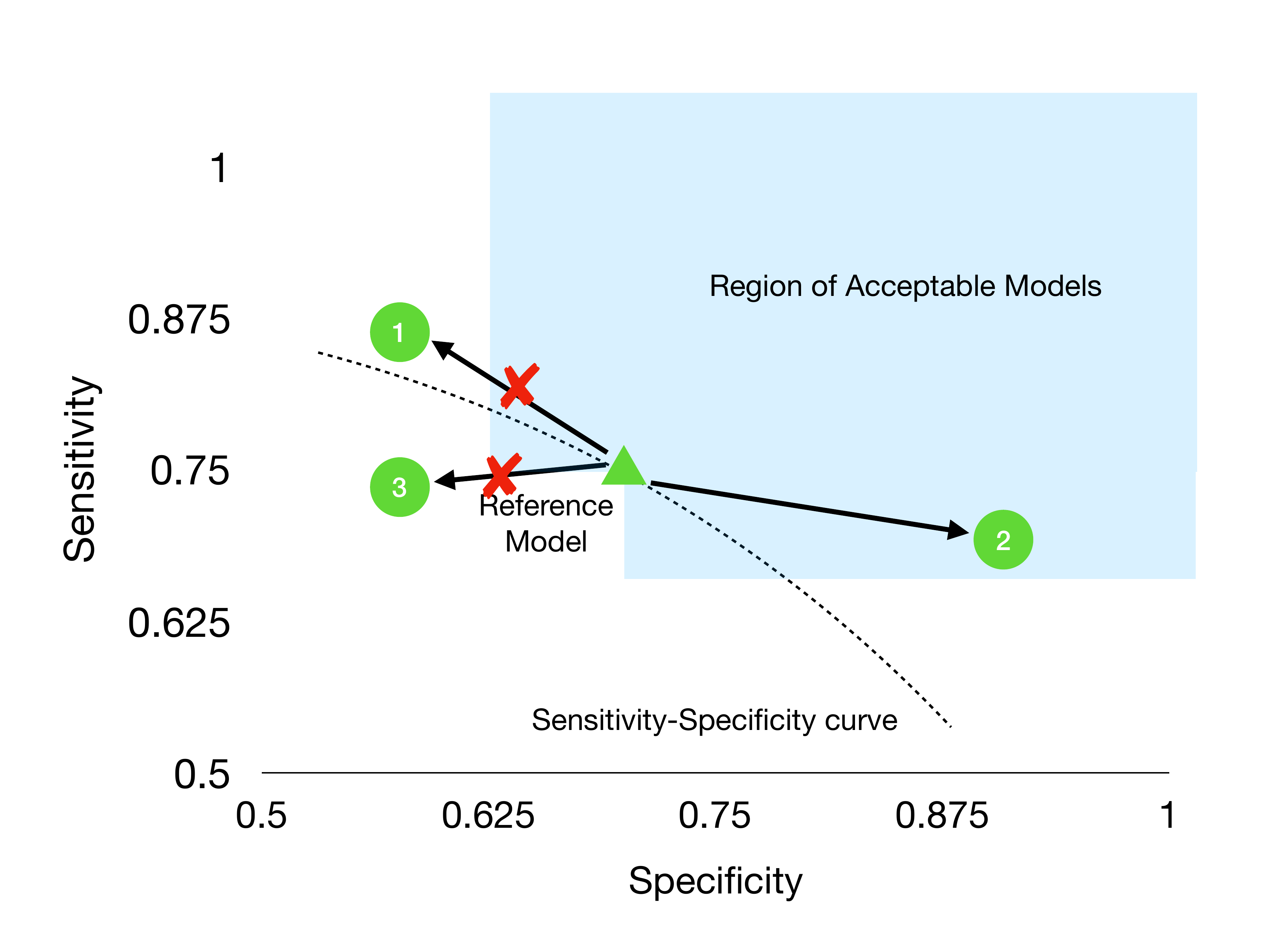}
	\caption{
	Example of an acceptability graph for binary classifiers evaluated on sensitivity and specificity.
	Given a reference model (triangle) and NI margin $\epsilon$, a candidate model is acceptable if one endpoint is non-inferior and the other is superior compared to the reference model.
	The NI margin can be chosen to encourage approval of updates to a better ROC curve.
	Models in the shaded blue area are acceptable updates to the reference model.
	Model 3 is not acceptable since it is on a strictly inferior ROC curve.
	Model 1 and 2 are likely on better ROC curves, but 1 is not within the NI margin and is therefore not acceptable either.
	}
	\label{fig:roc}
\end{figure}
A fundamental building block for designing an aACP is defining when a modification is acceptable to a reference model.
Our solution is to represent which modifications are acceptable using a directed graph between models in $\mathcal{Q}$.
If there is a directed edge from model $f$ to model $f'$, then it is acceptable to update $f$ to $f'$.
This ``acceptability graph'' is parameterized by a pre-defined vector of non-inferiority margins $\epsilon \in \mathbb{R}^K_+$.
An update from $f$ to $f'$ is acceptable if it demonstrates non-inferiority with respect to all endpoints and superiority in at least one \citep{Bloch2001-br, Bloch2007-xk}.
So for a binary classifier where the endpoints are sensitivity and specificity, one may select the NI margins to encourage modifications that shift the model to a better ROC curve (Figure~\ref{fig:roc}).
An acceptability graph is formally defined below:
\begin{definition}
	For a fixed evaluation window $D \in \mathbb{Z}^+$ and NI margin $\epsilon \in \mathbb{R}^K_+$, the acceptability graph at time $t$ over $\mathcal{Q}$ contains the edge from $f$ to $f'$ if $m_k(f, \mathbb{P}_{t:t + D - 1}) - \epsilon_k \le m_k(f', \mathbb{P}_{t:t + D - 1})$ for all $k = 1,...,K$ and there is some $k = 1,...,K$ such that $m_k(f', \mathbb{P}_{t:t + D - 1}) > m_k(f, \mathbb{P}_{t:t + D - 1})$.
	The existence of this edge is denoted $f \rightarrow_{\epsilon, D, t} f'$ and $f \nrightarrow_{\epsilon, D, t} f'$ otherwise.
\label{eq:acc_graph}
\end{definition}
In this paper, we assume $D$ is fixed and use the notation $f \rightarrow_{\epsilon, t} f'$.
For simplicity, Definition~\ref{eq:acc_graph} uses the same NI margin across all models.
In practice, it may be useful to let the margin depend on the reference model or the previously established limits of its predictive accuracy.

We obtain different graphs for different choices of $\epsilon$.
For instance, $\epsilon = 0$ means that a model is only acceptable if it is superior with respect to all endpoints, though this can be overly strict in some scenarios.
%For example, it will be difficult to demonstrate superiority if a binary classifier already has 99\% specificity; Instead, the manufacturer likely wants to keep specificity constant while improving sensitivity.
Setting $\epsilon \ne 0$ is useful for approving modifications that maintain the value of some endpoints or have very small improvements with respect to some endpoints.

Finally, we define hypothesis tests based on the acceptability graph.
In an $\epsilon$-acceptability test, we test the null hypothesis is that a model $f'$ is not an $\epsilon$-acceptable update to model $f$ at time $t$, i.e. $H_0: f \nrightarrow_{\epsilon, t} f'$ .
A superiority test is simply an $\epsilon$-acceptability test where $\epsilon = 0$.

\section{An online hypothesis testing framework}

At each time point, we suppose an aACP evaluates which candidates to approve by running a battery of hypothesis tests.
As such, we frame an aACP as an online hypothesis testing procedure.
In contrast to one-time hypothesis tests, online hypothesis testing procedures aim to control the error rate over a sequence of tests.
Accounting for the multiplicity of hypotheses is important since new modifications to an AI/ML-based SaMD can be proposed more easily and frequently compared to the drug development setting.

Each aACP specifies a sequence of approval functions $A_t$ for times $t = 1,2,...$ (Figure~\ref{fig:aacp}), where $A_t$ is a $\tilde{\mathcal{F}}_t$-measurable function that outputs the index of the most recently approved model at time $t$ (some value in $\{0,...,t - 1\}$).
Filtration $\tilde{\mathcal{F}}_t$ is the sigma-algebra for monitoring data up to time $t$ and proposed models and aACP outputs up to time $t - 1$.
The index of the latest approved model at time $t$ is denoted $\hat{A}_t$.
A model was approved at time $t$ if $\hat{A}_t \ne \hat{A}_{t-1}$.
Assuming companies are not interested in approving older models, we require $\hat{A}_t \ge \hat{A}_{t-1}$.

\begin{figure}
	\centering
	\includegraphics[width=\linewidth]{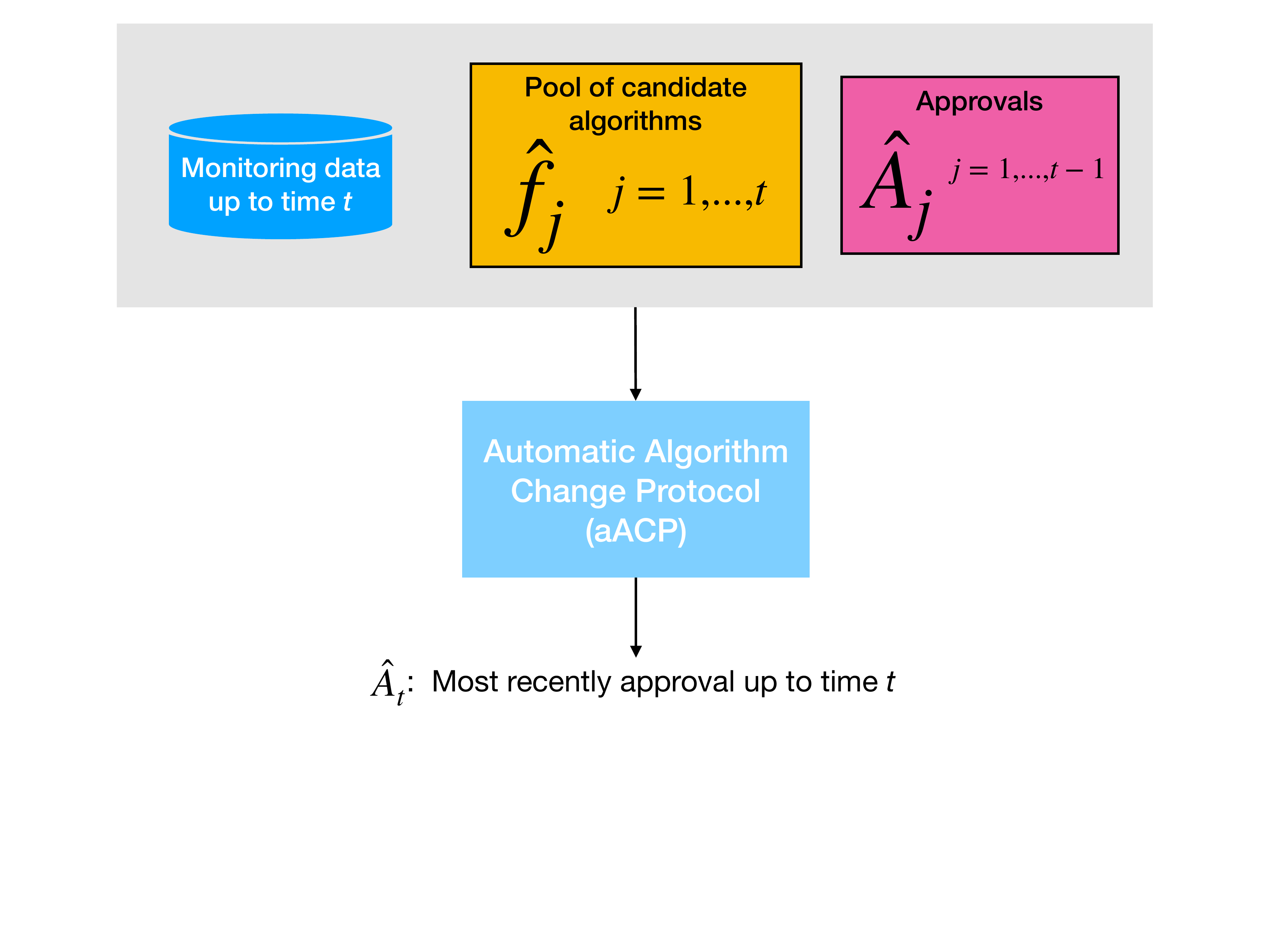}
	\vspace{-0.8in}
	\caption{
	An automatic Algorithm Change Protocol (aACP) outputs the index of the most recently approved model $\hat{A}_t$ at each time $t$.
	To do so, it evaluates the pool of candidate models against the pool of previously approved models using monitoring data collected up to that time.
	}
	\label{fig:aacp}
\end{figure}

Different approval functions lead to different aACPs.
%We obtain different aACPs by varying the definitions of the approval functions.
For example, the following are two simple aACPs that one may plausibly consider but do not provide error-rate guarantees.

\begin{quote}
\textbf{aACP-Baseline} approves any modification that demonstrates $\epsilon$-acceptability to the initially approved model at a fixed level $\alpha$.
This can be useful when the initial model has high predictive accuracy.
The manufacturer may also argue this is reasonable policy because the current laws only require a model to perform better than placebo, i.e. the standard of care without utilizing AI/ML-based SaMDs.
\end{quote}
\begin{quote}
\textbf{aACP-Reset} approves any modification that demonstrates $\epsilon$-acceptability to the currently approved model at some fixed level $\alpha$.
As opposed to aACP-Baseline, this policy encourages the model to improve over time.
\end{quote}

%To ensure the quality of approved models, an aACP should try to control the frequency of erroneous approvals.
%Since the most recently approved model is recommended for wide deployment, an unacceptable modification is considered an error.
%More generally, an aACP that makes $d$ decisions at each time point should control at least $d$ different online error rates to ensure the quality of its decisions.

\section{Online error rates for aACPs}

We define two online Type I error rates that an aACP might try to control and describe aACPs that uniformly control the error rates over time.
Manufacturers and regulators should select the error rate definition and aACP most suitable for their purposes.
These aACPs achieve error rate control as long as their individual hypothesis tests are controlled at their nominal levels.
We achieve this by testing on only prospectively-collected monitoring data.

For both definitions, the error rate at time $T$ is evaluated over a window of width $W$, i.e. time points $1 \vee (T - W)$ to $T$.
The hyperparameter $W$ must be pre-specified and specifies different trade-offs between error control and speed: $W =\infty$ requires the strongest error rate control, but is overly strict in most cases, and $W = 1$ requires the weakest error control, but can lead to bad long-running behavior.
The desired trade-off is typically in between these extremes.
%We will not consider procedures that sequester a portion of data for sole use by the aACP and re-use data across hypothesis tests.
%Though such procedures may be more efficient in certain cases, they must carefully take into account the adaptive nature of the proposed models to minimize over-optimism, perhaps by using ideas from differential privacy \citep{Dwork2015-da, Blum2015-hv}.

\subsection{Bad approval count}
\label{sec:bad_prob}
We define a bad approval as one where the modification is unacceptable with regards to \textit{any} of the previously approved models.
The first error rate is defined as the expected Bad Approval Count (BAC) within the current window of width $W$:
%Based on this, the first error rate is defined as the expected Bad Approval Count (BAC) within the current window of width $W$, as given below:
\begin{definition}
	The expected bad approval count within the $W$-window at time $T$ is
	$$
	\BAP_W(T) = E \left[
	\sum_{t = 1 \vee (T - W)}^{T}
	\mathbbm{1}
	\left\{
	\exists t' = 1,...,t-1
	\text{ s.t. }
	\hat{f}_{\hat{A}_{t'}}
	\nrightarrow_{\epsilon, t}
	\hat{f}_{\hat{A}_{t}}
	\right\}
	\right].
	$$
	\label{def:fwer_window}
\end{definition}
This error rate captures two important ways errors can accumulate over time: bio-creep and the multiplicity of hypotheses.
We discuss these two issues below.

When a sequence of NI trials is performed and the reference in each trial is the latest model that demonstrated NI, the performance of the approved models will gradually degrade over time; This phenomenon has been called bio-creep in previous work \citep{Fleming2008-ok}.
Bio-creep can also happen in our setting: Even if each approved model demonstrates superiority with respect to some endpoints and NI with respect to others, repeated applications of $\epsilon$-acceptability tests can still lead to approval of strictly inferior models.
The risk of bio-creep is particularly pronounced because the model developer can perform unblinded adaptations.
To protect against bio-creep, Definition~\ref{def:fwer_window} counts it as a type of bad approval.
%So, an aACP that controls the expected bad approval count at some level $\alpha$ also controls the probability of bio-creep (within that window) at level $\alpha$.

Second, when a long sequence of hypothesis tests is performed, the probability of a false rejection is inflated due to the multiplicity of hypotheses.
Definition~\ref{def:fwer_window} accounts for multiplicity by summing the probabilities of bad approvals across the window.
It is an upper bound for the probability of making any bad approval within the window, which is similar to the definition of family-wise error rate (FWER).
In fact, we use the connection between FWER and BAC in the following section to design an aACP that controls this error rate.
%In fact, the following section proposes an aACP that essentially uses Bonferroni connection to control $\BAP_W$, but properly accounts for unblinded adaptation by launching fixed hypothesis tests and only uses future monitoring data.

\subsubsection{aACP to control bad approval counts}
\label{sec:bad_prob_acp}

We now present aACP-BAC, which uniformly controls $\BAP_W(\cdot)$.
An aACP is defined by its skeletal structure, which specifies the sequence of hypothesis tests run, and a procedure that selects the levels to perform the hypothesis tests.
To build up to aACP-BAC, we i) first describe a simple aACP skeleton that launches a fixed sequence of group sequential tests (GSTs), ii) add gate-keeping to increase its flexibility, and iii) finally pair it with a sequence of $\tilde{\mathcal{F}}_t$-measurable functions $\{\alpha_t: t = 1,2,...\}$ for choosing the hypothesis test levels.
The full algorithm is given in Algorithm~\ref{algo:aacp_count} in the Appendix.
For now, we assume the distributions are constant and simply use the notation $\rightarrow_\epsilon$ in place of $\rightarrow_{\epsilon, t}$.
We discuss robustness to time trends in a later section.

\begin{figure}
%	\centering
	\includegraphics[width=1.2\linewidth]{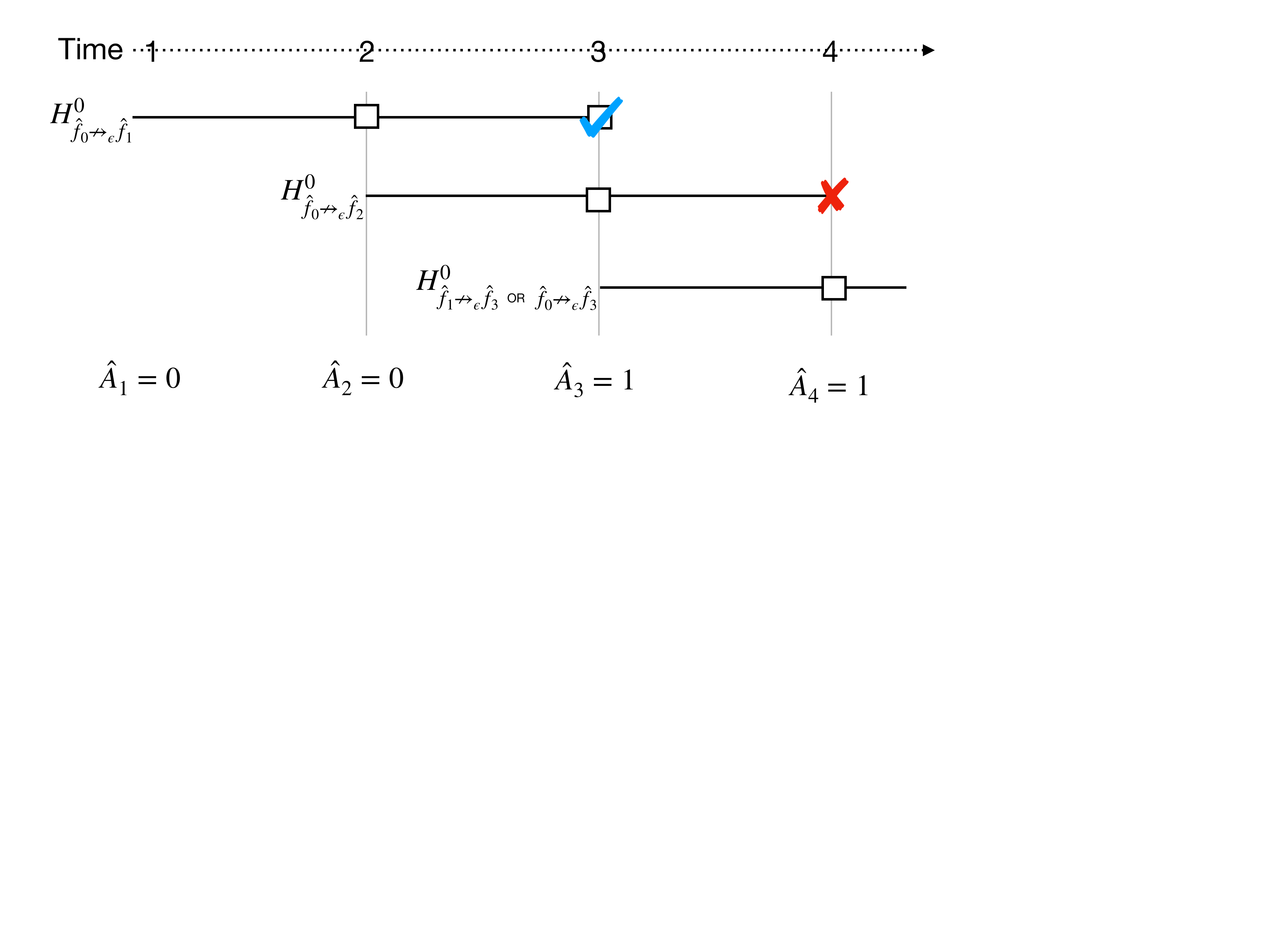}
	\vspace{-1.8in}
	\caption{
	At each time point, this simple aACP launches a single group sequential test (GST) comparing the newly proposed model to previously approved models.
	Here, each model has a maximum wait time of $\Delta = 2$ and each interim analysis is represented by a square.
	A checkmark indicates that the null hypothesis is rejected and an ``X'' indicates that the interim analysis is not performed.
	The final interim analysis for $\hat{f}_2$ is not performed because its GST only compares $\hat{f}_2$ to $\hat{f}_0$ and not the newly approved model $\hat{f}_1$.
	Thus, $\hat{f}_2$ has no chance of being approved.
	}
	\label{fig:aacp_simple}
\end{figure}

Let us first consider the a simple aACP skeleton that compares each proposed model to previously approved models using a single hypothesis test (Figure~\ref{fig:aacp_simple}).
More specifically, at time $t$, it launches a group sequential $\epsilon$-acceptability test with the null hypothesis
\begin{align}
H_0: \exists t' = 1,...,t \text{ s.t. } \hat{f}_{\hat{A}_{t'}} \nrightarrow_{\epsilon} \hat{f}_t.
\label{eq:simple_null}
\end{align}
The number of interim analyses is the maximum wait time $\Delta_t$ and the critical values are chosen according to an alpha-spending function specified prior to launch \citep{DeMets1994-pf}.
At each time point, we also perform interim analyses for all active hypothesis tests (i.e. those not past their maximum wait time).
The aACP approves $\hat{f}_j$ at time $t$ if it demonstrates acceptability to $\hat{f}_{\hat{A}_1}, ...,\hat{f}_{\hat{A}_{t-1}}$.
If multiple models are acceptable, it selects the latest one.

%This simple aACP skeleton evaluates each proposed model by launching an individual hypothesis test (Figure~\ref{fig:aacp_simple}).
%It performs two steps at each time $t$.
%First, it launches a level-$\hat{\alpha}_t$ group sequential $\epsilon$-acceptability test with the null hypothesis that there is some $t' = 1,...,t$ such that $\hat{f}_{\hat{A}_{t'}} \nrightarrow_{\epsilon} \hat{f}_t$.
%The maximum wait time for this test is $\Delta_t$ and the aACP must select a corresponding alpha-spending function \citep{DeMets1994-pf}.
%Second, the aACP performs interim analyses for all active hypothesis tests (i.e. those that are not past their maximum wait time).
%If an active test rejects the null hypothesis and demonstrates model $\hat{f}_j$ is acceptable to models $\hat{f}_{\hat{A}_{1}},...,\hat{f}_{\hat{A}_{t - 1}}$, then the aACP approves $\hat{f}_j$, i.e. $\hat{A}_t = j$.
%If multiple models are acceptable, the aACP selects the latest one.

A drawback of this simple aACP skeleton is that it fails to adapt to new model approvals that occur in the middle of a group sequential test (GST).
Consider the example in Figure~\ref{fig:aacp_simple}, where a GST with null hypothesis $H^0_{\hat{f}_0 \nrightarrow_{\epsilon} \hat{f}_1}$ is launched at time $t = 1$ and a second GST with null hypothesis $H^0_{\hat{f}_0 \nrightarrow_{\epsilon} \hat{f}_2}$ is launched at time $t = 2$.
If $\hat{f}_1$ is approved at time $t = 3$, this aACP cannot approve $\hat{f}_2$ since its GST only compares $\hat{f}_2$ to $\hat{f}_0$.
Ideally, it could adapt to the new approval and add a test comparing $\hat{f}_2$ to $\hat{f}_1$.

aACP-BAC addresses this issue by evaluating proposed model $\hat{f}_t$ using a \textit{family} of acceptability tests instead (Figure~\ref{fig:aacp_simple}).
In addition to the aforementioned test for the null hypothesis \eqref{eq:simple_null}, this family includes acceptability tests to test each of the null hypotheses
\begin{align}
H_{0,j}: \hat{f}_{j} \nrightarrow \hat{f}_t \text{ for } j = \hat{A}_t + 1, ..., t - 1.
\end{align}
As before, a model is approved at time $t$ only if it demonstrates acceptability compared to all approved models up to time $t$.
To control the online error rate, aACP-BAC controls the FWER for each family of tests using a serial gate-keeping procedure.
Recall that gate-keeping tests hypotheses in a pre-specified order and stops once it fails to reject a null hypothesis \citep{Dmitrienko2007-hp}.
No alpha adjustment is needed in gate-keeping; It controls FWER at $\alpha$ by performing all tests at level $\alpha$.
Here, the tests are naturally ordered by the index of the reference models, from oldest to latest.
Moreover, this ordering maximizes the probability of approval, assuming the proposed models improve in predictive accuracy.
Details for performing GSTs with gate-keeping are given in \citet{Tamhane2018-rr}.

%We now describe the full skeleton of aACP-BAC (Algorithm~\ref{algo:aacp_count}).
%As before, aACP-BAC performs two steps at each time $t$.
%First, it launches the aforementioned family of GSTs comparing models against the newly proposed model $\hat{f}_t$ and test the hypotheses at level $\hat{\alpha}_t$ using gatekeeping \citep{Tamhane2018-rr}.
%\textred{do we need to talk about how gatekeeping + GST + multiple endpoints is done exactly? does the previous reference work?}
%The aACP must specify the maximum wait time for the family $\Delta_t$ and the alpha-spending function.
%Second, it performs all interim analyses for all active tests using gatekeeping.
%Model $\hat{f}_{t'}$ is approved at time $t$ if it demonstrates acceptability compared to models $\hat{f}_{\hat{A}_1},...,\hat{f}_{\hat{A}_{t'}},\hat{f}_{\hat{A}_{t'} + 1},...,\hat{f}_{\hat{A}_{t}}$.

To uniformly control $\BAP_W(\cdot)$ at $\alpha$, aACP-BAC computes an over-estimate of $\BAP_W(t)$ at each time $t$ and selects level $\hat{\alpha}_t$ such that the over-estimate is bounded by $\alpha$.
Using a union bound like that in Bonferroni correction, it uses the over-estimate
\begin{align}
\widehat{\BAP}_W(t) =
\sum_{t' = 1}^{t} \hat{\alpha}_{t'} \mathbbm{1}\left\{
t  - W \le t' + \Delta_{t'} \le t
\right\}
\label{eq:bap_estimate}
\end{align}
and selects $\hat{\alpha}_t$ such that
\begin{align}
\widehat{\BAP}_W(t) \le \alpha.
\label{eq:bap_control}
\end{align}
In the Appendix, we prove that aACP-BAC achieves the nominal rate.

Alternatively, we can think of aACP-BAC as an alpha-investing procedure \citep{Foster2008-ek} that begins with an alpha-wealth of $\alpha$, spends it when a family of tests is launched, and earns it back when the family leaves the current window.
From this, we can see that choosing an infinitely long window ($W = \infty$) has low power because the aACP will spend but never earn alpha-wealth.
This is analogous to the so-called ``alpha-death'' issue that occurs in procedures that control online FWER \citep{Ramdas2017-un}.
We sidestep the issue of alpha-death by selecting a reasonable value for $W$.

\begin{figure}
	\centering
	\includegraphics[width=1.2\linewidth]{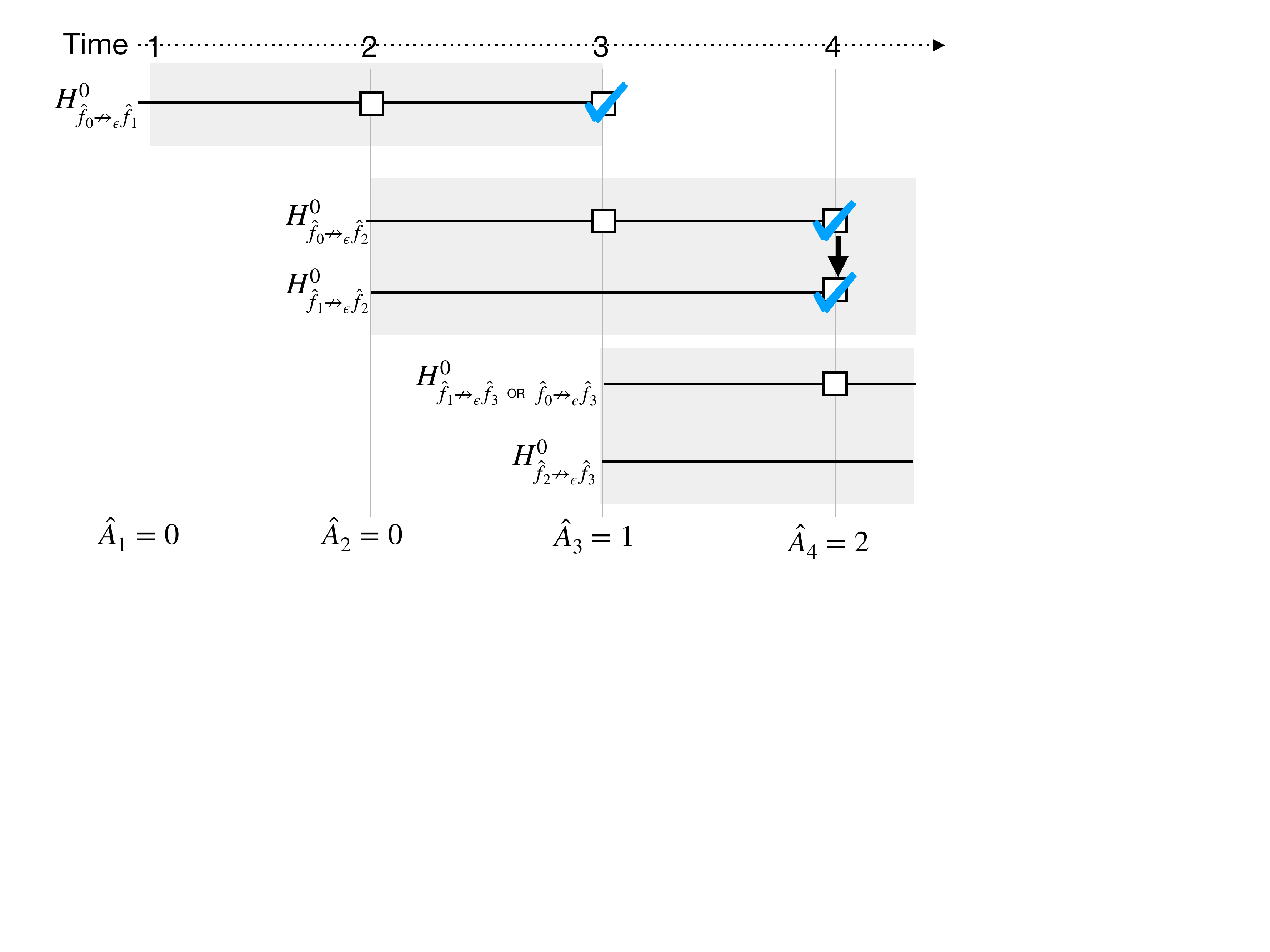}
	\vspace{-1.3in}
	\caption{
		At each time point, this aACP launches a family of group sequential tests (shaded gray boxes) comparing the newly proposed model to previously approved models as well as other models that might be approved in the interim.
		Within each family, we test the hypotheses using a gatekeeping procedure, which provides a mechanism for comparing a candidate model to newly approved models in the interim.
		We use the same notation in Figure~\ref{fig:aacp_simple}.
		An arrow between squares indicates that we rejected a null hypothesis and proceeded to the next test in the gatekeeping sequence.
	}
	\label{fig:aacp_skelly}
\end{figure}

\subsection{Bad approval and benchmark ratios}
\label{sec:bad_ratio}

If the goal is to ensure that the SaMD improves on average and occasional drops in performance are tolerated, the approval policies for controlling $\BAP_W$ can be overly strict and unnecessarily conservative.
There are two solutions to this problem.
One approach (\textit{reward-approach}) is to reward the company for each superior model by resetting the level alpha.
The FDA essentially uses this procedure right now, as each clinical trial resets the alpha-spending clock.
Another idea (\textit{FDR-approach}) is to draw on the false discovery rate (FDR) literature: These procedures control the expected proportion of false rejections rather than the FWER, which has higher power when  some of the null hypotheses are false \citep{Benjamini1995-xt}.
This section defines a second online error rate based on these ideas.

In line with the \textit{reward-approach}, we consider aACPs that utilize approval and ``benchmark'' functions to label models.
Whereas an approved model should be acceptable to previous approvals, a benchmark model should 1) be a previously-approved model and 2) be \textit{superior} to the previous benchmark.
A benchmark function $B_t$ is formally defined as a $\tilde{F}_t$ measurable function that outputs the index of the latest benchmark model at time $t$.
For $t = 0$, we have $B_0 \equiv 0$.
Again, we use the hat notation to indicate the realized benchmark index.
A bad benchmark is one in which $\hat{f}_{\hat{B}_{t - 1}} \nrightarrow_{0, t} \hat{f}_{\hat{B}_{t}}$.
We do not compare against all previous benchmarks since $\nrightarrow_{0, t}$ is a transitive property when the superiority graph is constant.

Based on the \textit{FDR-approach}, we now introduce bad approval and benchmark ratios.
An aACP needs to control both ratios to control the frequency of bad approvals and benchmarks.
\begin{definition}
	For NI margin $\epsilon$, the bad approval ratio within $W$-window at time $T$ is
	\begin{equation}
	\BAR_W(T)
	=
	\frac{
		\sum_{t =1 \vee (T - W)}^T \mathbbm{1}\left\{
		\exists t' = 1,...,t - 1 \text{ s.t. }
		\hat{f}_{\hat{A}_{t'}}
		\nrightarrow_{\epsilon, t}
		\hat{f}_{\hat{A}_{t}}
		\right\}
	}{
		1 + \sum_{t =1 \vee (T - W)}^T \mathbbm{1}\left\{\hat{B}_{t} \ne \hat{B}_{t - 1}\right \}
	}.
	\label{eq:bar}
	\end{equation}
	The bad benchmark ratio within $W$-window at time $T$ is
	\begin{equation}
	\BSR_W(T)
	=\frac{
		\sum_{t =1 \vee (T - W)}^T \mathbbm{1}\left\{
		\hat{f}_{\hat{B}_{t - 1}}
		\nrightarrow_{0, t}
		\hat{f}_{\hat{B}_{t}}
		\right\}
	}{
		1 + \sum_{t = 1 \vee (T - W)}^T \mathbbm{1}\left\{\hat{B}_{t} \ne \hat{B}_{t - 1}\right \}
	}.
	\label{eq:bsr}
	\end{equation}
\end{definition}
Since only approved models can be designated as benchmarks, $\BAR_{W}$ is an upper bound for the proportion of bad approvals (this is approximate because the denominator is off by one).
%Thus, controlling these two ratios is particularly suitable when we want the model to improve on average and allow occasional drops in performance.
So if a modification always decreases or increases performance by a single unit, $\BAR_{W}(\cdot) < 0.5$ means that the currently approved model is no worse than the initial version.
%In contrast, controlling the expected bad approval count $\BAP_W$ is more suitable when we would like to avoid any drops in performance with high probability.

The denominator in \eqref{eq:bar} was deliberately chosen to be the number of unique benchmarks rather than the number of approvals because the latter is easy to inflate artificially.
We can simply propose models by alternating between two models that are $\epsilon$-acceptable to each other.
This strategy does not work for benchmarks because they require demonstrating superiority.

%Again, this definition depends on the choice of the window hyperparameter.
%Not only does a finite window prevent the alpha-death issue, it also avoids an issue called piggy-backing, in which initially-collected alpha wealth allows one to reject many later hypotheses at large alpha levels.
%Previous work addresses these two issues by introducing an exponential decay term in the numerator and denominator, to obtain a regret-like term like that in reinforcement and online learning \citep{Ramdas2017-un}.
%However, we prefer a window-based definition since it is easy to interpret and control in settings where hypothesis tests are launched asynchronously.

\subsubsection{aACP to control bad approval and benchmark ratios}
\label{sec:bad_ratio_acp}

Instead of controlling the expectations of \eqref{eq:bar} and \eqref{eq:bsr}, we describe aACP-BABR for controlling the modified expected bad approval and benchmark ratios.
These modified ratios are based on a similar quantity in the online FDR literature known as modified online FDR \citep{Foster2008-ek}.
We chose to control the modified versions because they can be controlled under less restrictive conditions and using relatively intuitive techniques \citep{Ramdas2017-un}.
Moreover, \citet{Foster2008-ek} found that modified online FDR has similar long-running behavior to online FDR.
We define modified expected bad approval and benchmark ratios below.
%The online FDR literature considers a similar quantity known as \textit{modified} online FDR \citep{Foster2008-ek}.
%The advantage is that the modified version can be controlled under less restrictive conditions than online FDR, the techniques are generally more intuitive, and online FDR and modified online FDR have similar behaviors in long-running procedures \citep{Ramdas2018-gh, Zrnic2018-ab}.
\begin{definition}
	For NI margin $\epsilon$, the modified expected bad approval ratio within $W$-window at time $T$ is
	\begin{align}
	\begin{split}
	& \meBAR_W(T)\\
	& =
	\frac{
		E\left[
		\sum_{t =1 \vee (T - W)}^T \mathbbm{1}\left\{
		\exists t' = 1,...,t - 1 \text{ s.t. }
		\hat{f}_{\hat{A}_{t'}}
		\nrightarrow_{\epsilon, t}
		\hat{f}_{\hat{A}_{t}}
		\right\}
		\right]
	}{
		E\left[
		1 +
		\sum_{t =1 \vee (T - W)}^T \mathbbm{1}\left\{\hat{B}_{t} \ne \hat{B}_{t - 1}\right \}
		\right]
	}.
	\label{eq:bar_decay}
	\end{split}
	\end{align}
	The modified expected bad benchmark ratio within $W$-window at time $T$ is
	\begin{equation}
	\meBSR_{W}(T)
	=\frac{
		E\left[\sum_{t = 1 \vee (T - W)}^T
		\mathbbm{1}\left\{
		\hat{f}_{\hat{B}_{t - 1}}
		\nrightarrow_{0, t}
		\hat{f}_{\hat{B}_{t}}
		\right\}
		\right]
	}{
		E\left[1 + \sum_{t = 1 \vee (T - W)}^T
		\mathbbm{1}\left\{\hat{B}_{t} \ne \hat{B}_{t - 1}\right \}
		\right]
	}.
	\end{equation}
\end{definition}

\noindent Next, we describe how aACP-BABR uniformly controls $\meBAR_W(\cdot)$ and $\meBSR_{W}(\cdot)$ at levels $\alpha$ and $\alpha'$, respectively (Algorithm~\ref{algo:aacp_ratio}).
We begin with its skeleton and then discuss the alpha-investing procedure.
Again, we assume the distributions $\mathbb{P}_t$ are constant.

aACP-BABR uses the acceptability tests from aACP-BAC to approve modifications and superiority tests to discover benchmarks.
So at time $t$, in addition to launching a family of acceptability tests to evaluate model $\hat{f}_t$ for approval, aACP-BABR also launches a family of group-sequential superiority tests comparing $\hat{f}_t$ to models with indices $\left\{\hat{B}_{t - 1}, .... , t - 1 \right\}$, which are executed in a gate-keeping fashion from oldest to latest.
Let $\Delta_t'$ be the maximum wait time for the superiority tests, which can differ from the maximum wait time for acceptability tests.
%the maximum wait times and alpha-spending functions can differ between the acceptability and superiority tests.
A model $\hat{f}_j$ is designated as a new benchmark at time $t$ if it demonstrates superiority to models $\hat{f}_{\hat{B}_{j - 1}},...,\hat{f}_{\hat{B}_{t - 1}}$.
If multiple benchmarks are discovered at the same time, the aACP can choose any of them (we choose the oldest one in our implementation).

aACP-BABR uses an alpha-investing procedure based on \citet{Ramdas2017-un} to control the error rates.
Let the $\tilde{F}_t$-measurable function $\alpha_t'$ specify the level to perform superiority tests launched at time $t$.
At time $t$, aACP-BABR constructs over-estimates of the error rates $\BAR_{W}(t)$ and $\BSR_W(t)$ and selects $\hat{\alpha}_t$ and $\hat{\alpha}_{t'}$ such that the over-estimates are no larger than the nominal levels.
The over-estimates are
\begin{align}
\widehat{\BAR}_{W}(t)
& =
\frac{
	\sum_{t' = 1}^t
	\hat{\alpha}_{t'} \mathbbm{1}\left\{
	t  - W \le t' + \Delta_{t'} \le t
	\right\}
}{
	1 +
	\sum_{t' = 1\vee (t-W)}^t
	\mathbbm{1}\{\hat{B}_{t'} \ne \hat{B}_{t' - 1} \}
}
\label{eq:est_bar}\\
\widehat{\BSR}_{W}(t)
& =
\frac{
	\sum_{t' = 1}^t
	\hat{\alpha}'_{t'}
	\mathbbm{1}\left\{
	t  - W \le t' + \Delta'_{t'} \le t
	\right\}
}{
	1 +
	\sum_{t' = 1\vee (t-W)}^t
	\mathbbm{1}\{\hat{B}_{t'} \ne \hat{B}_{t' - 1} \}
}.
\label{eq:est_bsr}
\end{align}
It selects $\hat{\alpha}_t$ and $\hat{\alpha}'_t$ such that
\begin{align}
\widehat{\BAR}_{j}(t) \le \alpha& \quad \forall j = 1,...,W
\label{eq:bar_control}
\\
\widehat{\BSR}_{j}(t) \le \alpha'& \quad \forall j = 1,...,W.
\label{eq:bsr_control}
\end{align}
(We consider all window sizes since we also need to over-estimate future errors $\BAR_{W}(t')$ and $\BSR_W(t')$ for $t' > t$.)
%\textred{(we actually have to consider all possible windows, to account for a worst case scenario where no benchmarks are approved in the upcoming time points)}
So, aACP-BABR earns alpha-wealth when new benchmarks are discovered, which unites ideas from \textit{FDR-approach} and \textit{reward-approach}.
We prove in the Appendix that aACP-BABR provides the desired error control.

%The aACP we have proposed should be considered a first pass.
%There are many alternative designs we have not pursued but are interesting lines of work.
%One idea is to launch superiority tests for a model only after it is approved.
%Though this discards the data used to demonstrate acceptability, it spends the alpha adaptively based on promising models.
%If we would like to reuse data, we can also explore ways to reuse the data used to demonstrate acceptability using differential privacy ideas.
%Another idea is to optimize the alpha-spending functions for the group-sequential superiority and acceptability tests.
%In our current implementation, we have not optimized either and use the same alpha-spending families for both.

\subsection{Effect of time trends}
\label{sec:time_trends}
Time trends are likely to occur when an aACP is run for a long time.
We now discuss how robust aACP-BAC and -BABR are to time trends.
We consider levels of increasing severity: the distributions are relatively constant over time (\textit{no-trend}), the distributions are variable but the acceptability graphs are relatively constant (\textit{graph-constant}), and the acceptability graphs change frequently (\textit{graph-changing}).

When the distributions are relatively constant over time, aACP-BAC and -BABR should approximately achieve their nominal error rates.
Recall that the two aACPs perform paired T-tests by approximating the distribution $\mathbb{P}_{t:t + D - 1}$ with monitoring data sampled from $\mathbb{P}_{j \vee j':t - 1}$, which is reasonable when the distributions are relatively constant over time.
When there are multiple endpoints, one can either choose a GST that rejects the null hypothesis when all endpoints surpass the significance threshold at the same interim time point or when the endpoints surpass their respective thresholds at any interim timepoint.
The former approach is more robust to time trends with only modest differences in power \citep{Asakura2014-oz}.
%Thus the aACPs test the time-dependent null hypotheses $H^0: \hat{f}_{j} \nrightarrow_{\epsilon, t} \hat{f}_{j'}$ at approximately the nominal Type I error rates, which means their error rates should be close to the nominal rates.

When the distributions are not constant but the acceptability graphs are, the GSTs have inflated error rates since they only guarantee Type I error control under the strong null.
%This can lead to inflated error rates of the aACP-BAC and aACP-BABR.
%\textred{are there group sequential methods for significant heterogeneity across batches?}
To handle heterogeneity in distributions over time, we can instead use combination tests, such as Fisher's product test and the inverse normal combination test, to aggregate results across time points \citep{Fisher1932-je, Hedges1985-io}.
%Combination tests control the error rate assuming the null hypothesis is shared across tests.
Since we assumed that the acceptability graphs are constant, this tests the null hypothesis that the shared acceptability graph does not have a particular edge (i.e. $H_0: f \nrightarrow_{\epsilon, \cdot} f'$); The alternative hypothesis is that the edge exists.
Thus, we can replace GSTs with combination tests to achieve the desired error control.

The most severe time trend is where the acceptability graphs change frequently.
Controlling error rates in this setting is extremely difficult because previous data is not informative for future time points.
In fact, even defining an error is difficult in this regime since the relative performance of models is highly variable over time, e.g. an approval at time $t$ that looks bad at time $t + 1$ might turn out to be a very good at time $t + 2$.
As such, we recommend checking that the acceptability graphs reasonably constant before using aACP-BAC and -BABR.

\section{Efficiency of an aACP}
Just as hypothesis tests are judged by their Type I error and power, an aACP should be judged by both the rate of approving bad modifications and that for good modifications.
Using a decision-theoretic approach, we characterize the rate of good approvals by the cumulative mean of an endpoint, which we refer to as the cumulative utility.
This quantity is similar to ``regret'' in the online learning literature \citep{Shalev-Shwartz2012-vg}.
\begin{definition}
	The cumulative utility of an aACP with respect to endpoint $m$ is
	\begin{align}
	E\left[
	\frac{1}{T}
	\sum_{t=1}^T m \left ( \hat{f}_{\hat{A}_t}, \mathbb{P}_t \right)
	\right].
	\label{eq:efficiency}
	\end{align}
\end{definition}
It is not possible to design a single aACP that maximizes \eqref{eq:efficiency} for all possible model developers since we allow arbitrary unblinded adaptations.
Instead, we will evaluate the cumulative utility through a variety of simulation settings.

%\subsection{Choosing the hyper-parameters}
%
%There are some hyperparameters of our procedure.
%\begin{itemize}
%	\item Size of window
%	\item Definition of NI and size of NI margin
%	\item How much alpha to spend at each time point
%	\item How maximum number of batches to test each hypothesis
%	\item the alpha spending functions for the GSMs
%\end{itemize}
%
%What are some example values you can use?

\section{Simulations}

Through simulation studies, we evaluate the operating characteristics of the following aACPs:
{ %\small
\begin{enumerate}
	\item Blind: Approve all model updates
	\item Reset: Perform an acceptability test at level 0.05 against the last-approved model
	\item Baseline: Perform an acceptability test at level 0.05 against the initial model
	\item aACP-BAC at level $\alpha = 0.2$ with window $W = 15$
	\item aACP-BABR at level $\alpha = \alpha' = 0.2$ with window $W = 15$. The ratio of maximum wait times between the benchmark and approval was fixed at $\Delta'/\Delta = 2$.
	\item Fixed: Only approve the first model
\end{enumerate}
}
\noindent The first three aACPs have no error rate guarantees but are policies one may consider; The others provide error rate control.
In the first two simulations, we try to inflate the error rates of the aACPs.
The next two study the cumulative utility of the aACPs when proposed models are improving on average.
The last simulation explores the effects of time trends.

All simulations below consider a binary classification task where the endpoints are sensitivity and specificity.
We test for acceptability/superiority using repeated confidence intervals \citep{Cook1994-la} and Pocock alpha-spending functions \citep{Pocock1977-hw}, where $\epsilon = 0.05$ for both endpoints.
%We ran 50 replicates for each simulation.
To compare the aACPs in different scenarios, we plot endpoints of the approved models over time and show error rates in Table~\ref{table:all}.
Full simulation details are in the Appendix.

\subsection{Incremental model deterioration}
\label{sec:model_deteriorate}
In this simulation, the proposed models deteriorate gradually.
This can occur in practice for a number of reasons.
For instance, a manufacturer might try to make their SaMD simpler, cheaper, and/or more interpretable by using fewer input variables, collecting measurements through other means, or training a less complex model.
Even if their modifications are well-intended, the sponsor might end up submitting inferior models.
A model developer can also inadvertently propose adverse modifications if they repeatedly overfit to the training data.
Finally, a properly trained model can be inferior if the training data is not representative of future time points if, say, the biomarkers lose their prognostic value over time.

This simulation setup tries to induce bio-creep by submitting models that are acceptable to the currently approved model but gradually deteriorate over time.
Each proposed model is worse by $\epsilon/2$ in one endpoint and better by $\epsilon/4$ in the other.
By alternating between deteriorating the two endpoints, the manufacturer eventually submits strictly inferior models.
%We set total time $T = 200$ and the maximum wait time $\Delta = 5$ for all models.
%The number of new monitoring observations at each time point increments by ten to estimate the true performance difference with increasing precision over time, starting with 200 observations.
%An increasing number of monitoring observations also simulates increasing popularity of a SaMD.

\begin{figure}
\includegraphics[width=1.2\linewidth]{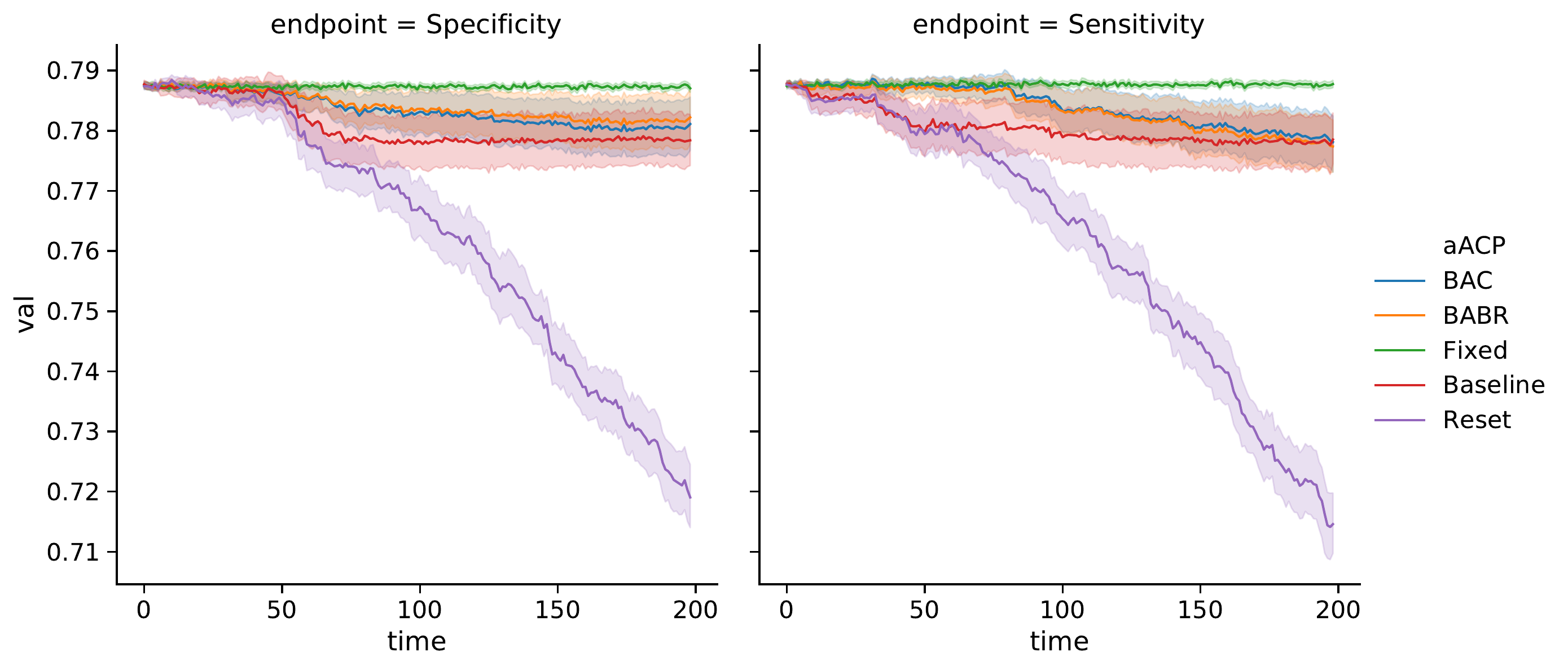}
\caption{
	Comparison of the sensitivity and specificity of models approved by different aACPs when the proposed models are gradually deteriorating.
	(We omit aACP-Blind from this plot since it would obviously perform the worst.)
}
\label{fig:adversary_biocreep}
\end{figure}

Bio-creep occurs consistently when using the aACP-Reset since it only compares against the most recently approved model (Figure~\ref{fig:adversary_biocreep}).
Both the sensitivity and specificity for the approved model at the final time point are significantly worse than the initial model.
aACP-BAC and aACP-BABR properly controlled the occurence of bio-creep since they require modifications to demonstrate acceptability with respect to \textit{all} previously approved models.

\subsection{Periodic model deterioration and improvement}
\label{sec:period}
Next we consider a simulation in which the proposed modifications periodically decline and improve in performance.
This scenario is more realistic than the previous section since a manufacturer is unlikely to only submit bad modifications.
More specifically, the proposed models monotonically improve in performance over the first fifteen time points and, thereafter, alternate between deteriorating and improving monotonically every ten time points.
%We set total time $T = 100$ and maximum wait time $\Delta = 5$ for all models.
%We accumulate 200 new observations at each time point.

As expected, aACP-Baseline had the worst error and cumulative utility.
It performed like aACP-Blind and the performances of the approved models were highly variable over time (Figure~\ref{fig:moody}).
In contrast, the other aACPs displayed much less variability and the performances were generally monotonically increasing.
aACP-Reset had the highest utility here because it performs hypothesis tests at a higher level alpha than aACP-BAC and aACP-BABR.

\begin{figure}
	\includegraphics[width=\linewidth]{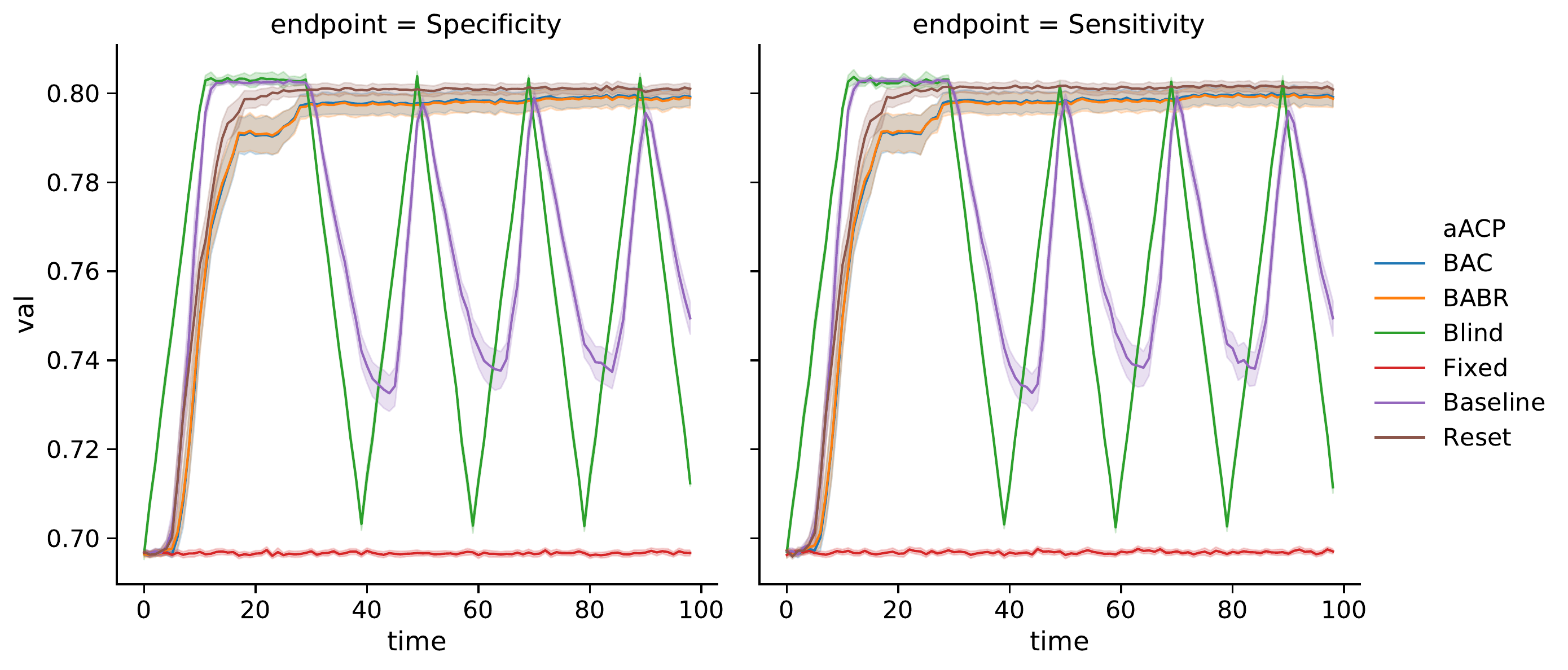}
	\caption{
	Comparison of the sensitivity and specificity of models approved by different aACPs when the proposed models periodically deteriorate and improve in performance.
%	aACP-Baseline, which only compares the candidate model to the initial model, essentially performs blind approval.
	}
	\label{fig:moody}
\end{figure}

\subsection{Accumulating data model updates}
\label{sec:accum}
We now suppose the manufacturer automatically generates modifications by training the same model on accumulating monitoring data.
%Each patient is represented by 30 covariates and the true outcome is generated using a logistic model.
In this simulation, the developer iteratively performs penalized logistic regression. % with a lasso penalty and tunes the penalty parameter using cross-validation.
Since model parameters are estimated with increasing precision, the expected improvement decreases over time and performance eventually plateaus.
As such, we investigate aACP behavior over a shorter time period.
%To increase the margin of model improvement at later time points and the ability to detect small improvements, we increase the number of training observations at each time point by five, starting with size 20, and use a larger wait time of $\Delta = 10$.
%The total time is $T = 40$ since the model performance plateaus quickly.

aACP-Blind approved good modifications the fastest; The remaining aACPs, excluding aACP-fixed, are close in cumulative utility (Figure~\ref{fig:accum}).
The similarity in performance is because less efficient aACPs can ``catch up'' in this setup: Even if an aACP has low power for detecting small improvements, the model developer will eventually propose a modification with a sufficiently large improvement that is easy to discern.
We note that aACP-BAC and -BABR behaved similarly in this simulation because models improved at a slow pace and performance plateaued over time.
As such, aACP-BABR often discovered one or no new benchmarks within a window and was unable to earn alpha-wealth much of the time.

\begin{figure}
	\includegraphics[width=\linewidth]{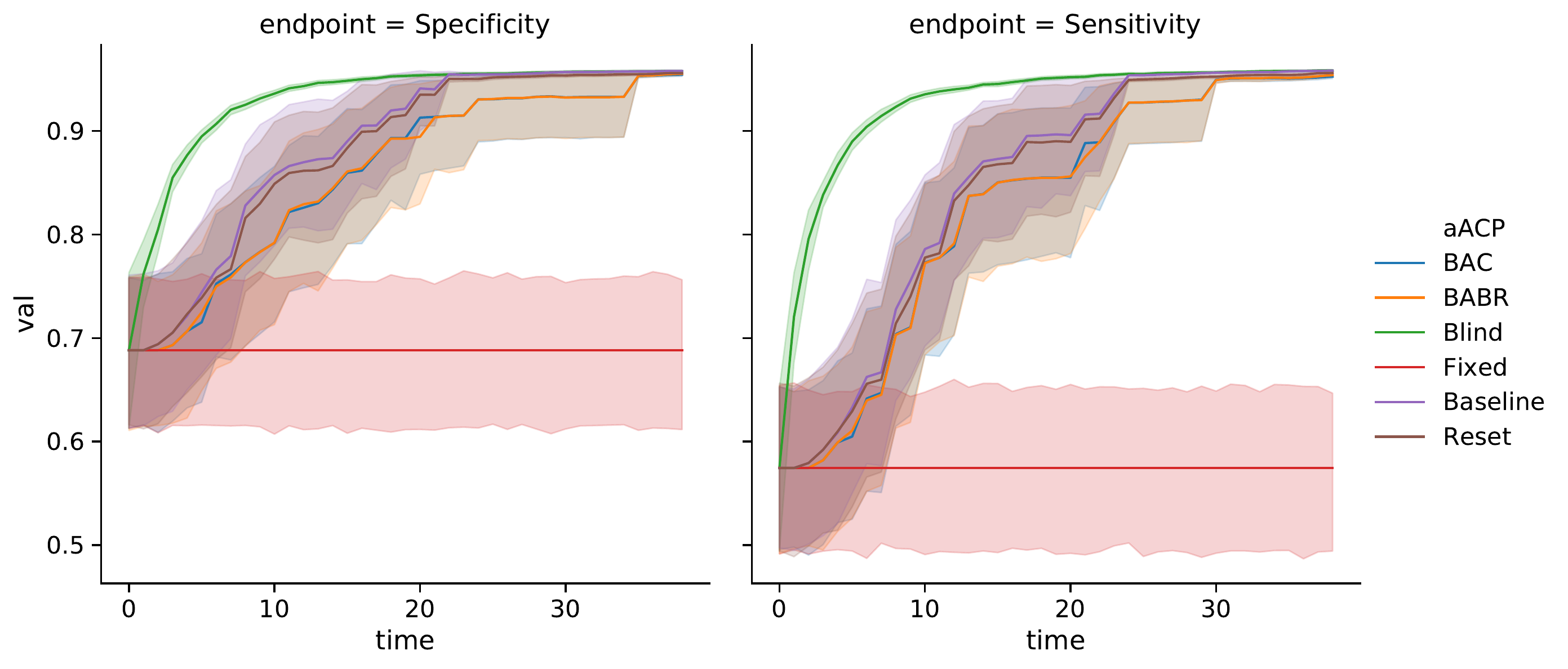}
	\caption{
	Comparison of the sensitivity and specificity of models approved by different aACPs when the model developer trains a logistic model on accumulating monitoring data.
%	Blind approval is the most efficient, though the aACPs are all quite similar.
	}
	\label{fig:accum}
\end{figure}

\subsection{Significant model improvements}
\label{sec:nice}

Next, we simulate a manufacturer that proposes models with large improvements in performance at each time point.
Large improvements usually occur when the modifications significantly change the model, such as adding a highly informative biomarker or replacing a simple linear model with a complex one that accounts for non-linearities and interactions.

Since large improvements are relatively rare, we used a short total time. % of $T = 20$. % and maximum wait time of $\Delta = 3$.
We designed the simulation to be less favorable for aACP-BAC and -BABR.
The model developer proposes a modification that improves both endpoints by 4\% compared to the most recently approved model.
Therefore an aACP cannot catch up by simply waiting for large improvements.

\begin{figure}
	\includegraphics[width=\linewidth]{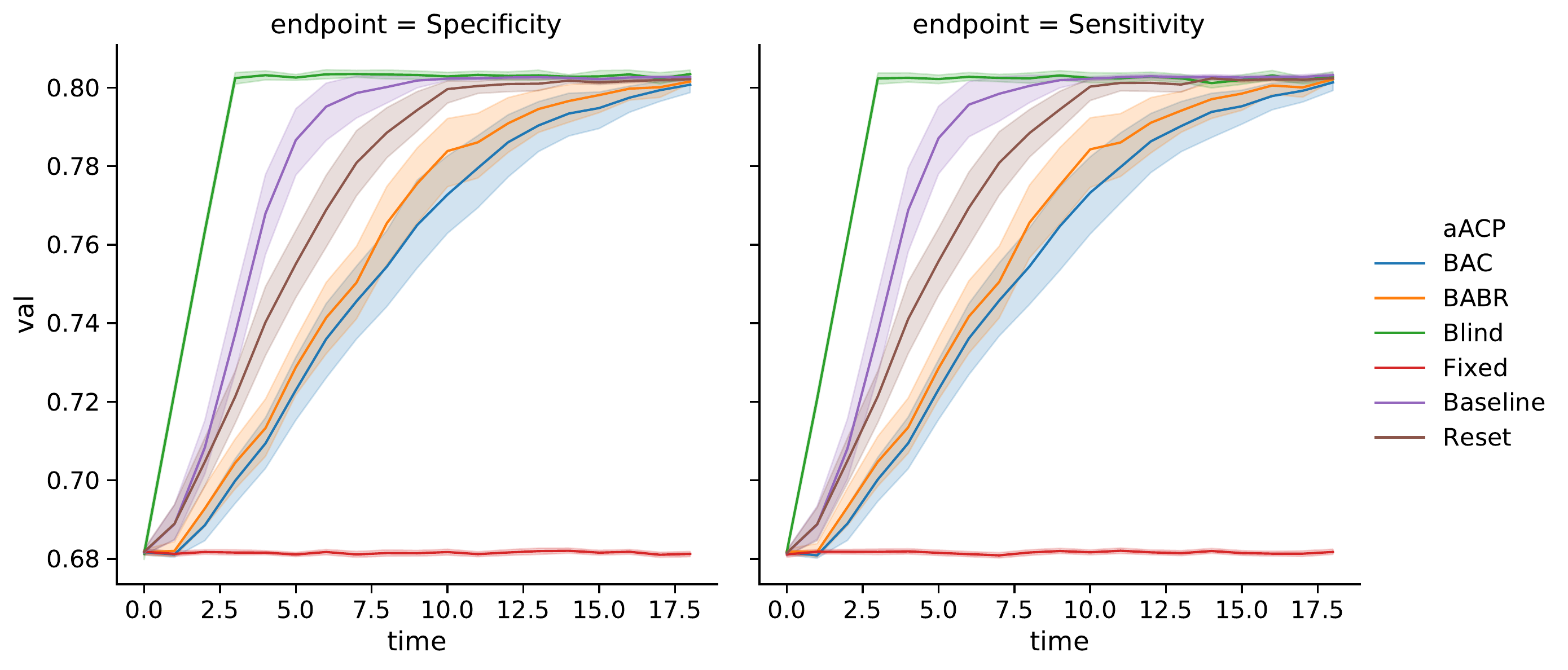}
	\caption{
	Comparison of the sensitivity and specificity of models approved by different aACPs when the model developer adaptively proposes a significantly better model than the currently approved model.
	We evaluate three different settings for aACP-BABR, where a larger index means that alpha-wealth is spent more greedily.
	}
	\label{fig:nice}
\end{figure}

As expected, Blind-aACP is the most efficient, followed by aACP-Baseline and aACP-Reset (Figure~\ref{fig:nice}).
The aACPs with error rate control are less efficient.
For example, the performance of the final models approved by aACP-BABR and aACP-Reset differed by 4\% on average.
Unlike in previous simulations, there is a clear difference between using aACP-BABR over aACP-BAP.
Since the models here improve at a fast pace, aACP-BABR earns enough alpha-wealth to discover new benchmarks with high probability.

\subsection{Robustness to time trends}
\label{sec:sim_time_trend}

Finally, we evaluate the robustness of aACP-BAC and -BABR to time trends by simulating the three time trend severity levels from Section~\ref{sec:time_trends}.
We simulate the endpoints of the proposed models to follow a sinusoidal curve.
In addition, the proposed model is always strictly inferior to the currently approved model on average.
For the \textit{graph-constant} setting, the sinusoids are aligned so that the proposed model is unacceptable at all time points.
For the \textit{graph-changing} setting, the sinusoids are offset by exactly half the period so that the proposed model is superior to the currently approved model at certain time points.
%The total time is $T = 100$ and the wait time is $\Delta = 5$.
%We use GSTs for both aACPs.

The error rates in the \textit{graph-constant} and \textit{no-trend} settings were similar (Table~\ref{table:all}), which implies that aACP-BAC and -BABR still control error rates if the acceptability graphs stay relatively constant.
However, they performed poorly in the \textit{graph-changing} setting since bad modifications appeared superior at particular time points.
%We note that aACP-BAC and -BABR behaved similarly in this simulation because the probability of discovering a benchmark was low.

\section{Discussion}

In this work, we have presented and evaluated different policies for regulating modifications to AI/ML-based SaMDs.
One of our motivations was to investigate the possibility of bio-creep, due to the parallels between this problem and noninferiority testing of new drugs.
We found that the risk of bio-creep is heightened in this regulatory problem compared to the traditional drug development setting because software modifications are easy and fast to deploy.
Nonetheless, we show that aACPs with appropriate online error-rate guarantees can sufficiently reduce the possibility of bio-creep without substantial sacrifices in our ability to approve beneficial modifications, at least in the specific settings discussed in this paper.

This paper only considers a limited scope of problems and there are still many interesting directions for future work.
One direction is to develop more efficient aACPs, perhaps by spending alpha-wealth more judiciously, discovering benchmarks using a different procedure, sequestering monitoring data for repeated testing, or considering the special case with pre-specified modifications.
Also, we have not considered aACPs that regulate modifications to SaMDs that are intended to treat and are evaluated based on patient outcomes.
%This work has also only focused on estimating the difference in endpoints between models rather than the endpoints themselves.
%Nonetheless, having an estimate of the endpoints is useful for tracking the absolute performance of the approved models.
%Since the aACPs here employ group sequential methods, one would need bias-correction methods such as in \citet{Emerson1990-mw}.

Our results raise the interesting question regarding the general structure of the regulatory policy framework.
Although aACP-BAC and -BABR mitigate the effect of bio-creep, they cannot provide indefinite error rate control without large sacrifices in cumulative utility.
So if one desires both indefinite error rate control and fast approval of good modifications, perhaps the solution is not to use a fully automated approach.
For example, human experts could perform comprehensive analyses every couple of years and the manufacturer could use an aACP in between to quickly deploy modifications.

Finally, we highlight that regulating modifications to AI/ML-based SaMDs is a highly complex problem.
This paper has primarily focused on the idealized setting of a constant diagnostic environment.
%There are still many difficult open questions---both statistical and not.
%For instance, who is in charge of executing the aACP? Should it be an independent third-party analogous to data monitoring committees used in clinical trials? Or can we run a containerized version of the software implementing the aACP?
%Also, how is monitoring data delivered to the aACP and the manufacturer in a way that minimizes the possibility of bias and misuse?
Our findings suggest that problems with bio-creep is more pervasive when modifications are designed to accommodate time trends in the patient population, available measurements, and bioclinical practice.
It is crucial that we thoroughly understand the safety risks before allowing modifications in these more complex settings.

%  The \backmatter command formats the subsequent headings so that they
%  are in the journal style.  Please keep this command in your document
%  in this position, right after the final section of the main part of
%  the paper and right before the Acknowledgements, Supplementary Materials,
%  and References sections.

\backmatter

%  This section is optional.  Here is where you will want to cite
%  grants, people who helped with the paper, etc.  But keep it short!

\section*{Acknowledgments}

The authors thank Ali Shojaie, Frederick A. Matsen IV, Holly Janes, Pang Wei Koh, and Qi Liu for helpful discussions and suggestions.
This work was supported by NIH Grant DP5OD019820.

\appendix

%  To get the journal style of heading for an appendix, mimic the following.

\section{}
\subsection{Proofs}
\begin{theorem}
	aACP-BAC achieves uniform control of $\BAP_W(\cdot)$ at level $\alpha$, i.e.
	\begin{equation}
	\BAP_W(T) \le \alpha \qquad T = 1,2,...
	\end{equation}
	\label{thrm:bac}
\end{theorem}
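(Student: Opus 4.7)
The plan is to combine three ingredients: (i) a per-family family-wise error rate bound from serial gate-keeping of the group-sequential acceptability tests, (ii) the identification of each bad approval with a false rejection inside a single family, and (iii) a summation across families that could contribute to the window $[T-W, T]$. Throughout, I would rely on the fact that $\hat{\alpha}_t$ is $\tilde{\mathcal{F}}_t$-measurable by construction, so that iterated expectations can freely interchange it with conditional probabilities.

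First, I would argue that any bad approval at time $t$ forces a false rejection in one specific family. If a new model $\hat{f}_j$ with $j = \hat{A}_t$ is approved at time $t$, then by the construction of aACP-BAC, the gate-keeping procedure for the family launched at time $j$ has rejected every null $H_{0,j'}: \hat{f}_{j'} \nrightarrow_{\epsilon} \hat{f}_j$ with $j'$ ranging over previously approved indices. A bad approval means that some such null was actually true, so at least one rejection in that family was false. Writing $E_j$ for the event that the family launched at time $j$ contains any false rejection, the bad-approval indicator at time $t$ is dominated by $\mathbbm{1}\{E_{\hat{A}_t}\}$. I would then invoke serial gate-keeping theory (Dmitrienko et al., 2007) together with the group-sequential extension in Tamhane et al. (2018): because the tests are run in a pre-specified order at common level $\hat{\alpha}_j$, and each group-sequential acceptability test has Type I error at most $\hat{\alpha}_j$ across all interim analyses via the chosen alpha-spending function, the family FWER does not exceed $\hat{\alpha}_j$, giving $\Pr(E_j \mid \tilde{\mathcal{F}}_j) \le \hat{\alpha}_j$.

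Finally, I would sum over families and invoke the budget constraint. Each family yields at most one approval, whose realized time $\tau_j$ lies in $[j,\, j + \Delta_j]$, so the family can contribute to $\BAP_W(T)$ only if its lifespan intersects $[T-W, T]$. Under the alpha-investing bookkeeping built into~\eqref{eq:bap_estimate}, this is tracked by charging each family's wealth $\hat{\alpha}_j$ against the window at its completion time $j + \Delta_j$, so that after applying iterated expectations we obtain
$$\BAP_W(T) \;\le\; E\!\left[\sum_{j=1}^T \hat{\alpha}_j\, \mathbbm{1}\{T - W \le j + \Delta_j \le T\}\right] \;=\; E[\widehat{\BAP}_W(T)] \;\le\; \alpha,$$
where the final inequality is the algorithmic invariant~\eqref{eq:bap_control}, enforced at every step by the choice of $\hat{\alpha}_t$. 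The main obstacle is precisely this last step: justifying that the backward-looking estimate $\widehat{\BAP}_W$, which bills each family at its completion time $j + \Delta_j$, validly upper-bounds $\BAP_W$, which counts bad approvals at their actual (possibly earlier) interim times. Handling this cleanly requires using the invariant $\widehat{\BAP}_W(t) \le \alpha$ at \emph{all} times $t$, not just at $t = T$, so that whichever window contains the completion time of a bad-approval family is itself controlled. The earlier steps are structural and follow from well-established gate-keeping and group-sequential results.
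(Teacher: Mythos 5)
Your proposal follows essentially the same route as the paper's own proof: dominate each bad approval by a false rejection within the gate-keeping family of the newly approved model, use the serial gate-keeping plus group-sequential structure to bound each family's conditional false-rejection probability by its level $\hat{\alpha}_j$ given $\tilde{\mathcal{F}}_j$, and then sum over families touching the window and invoke the constraint~\eqref{eq:bap_control} on the $\hat{\alpha}_t$'s. The completion-time bookkeeping issue you flag (families billed at $j+\Delta_j$ versus bad approvals counted at earlier interim times) is handled no more carefully in the paper, which simply sums $\hat{\alpha}_t$ over the window of launch times, so your sketch is at least as rigorous as the published argument.
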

\begin{proof}
	At each time point, aACP-BAC launches a set of hypothesis tests comparing $\hat{f}_t$ to models with indices $\hat{M}_t = \{\hat{A}_{1}, ..., \hat{A}_{t}, \hat{A}_{t} + 1,...,t - 1\}$.
	Let the $\tilde{\mathcal{F}}_t$-measurable random variable ${G}_t$ indicate the indices of the true null hypotheses, i.e.
	$$
	\hat{G}_t = \left\{
	j \in \hat{M}_t :
	\hat{f}_{j} \nrightarrow_\epsilon \hat{f}_{t}
	\right\}.
	$$
	It is easy to see that the number of bad approvals is upper bounded by the number of incorrect rejections of the launched null hypotheses, i.e.
	\begin{align}
	\begin{split}
	&
	\sum_{1 \vee (T - W)}^T
	\mathbbm{1}\left\{
	\exists t' = 1,\cdots,t - 1
	\text{ s.t. }
	\hat{f}_{\hat{A}_{t'}} \nrightarrow_{\epsilon} \hat{f}_{\hat{A}_t}
	\right\}
	\\
	\le &
	\sum_{1 \vee (T - W)}^T
	\mathbbm{1}\left\{
	\exists j \in \hat{G}_t,
	\exists t' = 1,\cdots,\Delta_t,
	\text{ s.t. reject }
	\hat{f}_{j} \nrightarrow_{\epsilon} \hat{f}_t
	\text{ at time } t + t'
	\right\}.
	\end{split}
	\end{align}
	Taking the expectations on both sides, $\BAP_W(T)$ is upper-bounded by
	\begin{align}
	\sum_{1 \vee (T - W)}^T
	\Pr\left(
	\exists j \in \hat{G}_t,
	\exists t' = 1,\cdots,\Delta_t,
	\text{ s.t. reject }
	\hat{f}_{j} \nrightarrow_{\epsilon} \hat{f}_t
	\text{ at time } t + t'
	\right).
	\label{eq:bap_bound}
	\end{align}
	Since the hypothesis tests are tested using a gatekeeping procedure, each probability in \eqref{eq:bap_bound} is equal to the probability of rejecting the first true null hypothesis in the gatekeeping sequence.
	Thus,
	\begin{align}
	& \Pr\left(
	\exists j \in \hat{G}_t,
	\exists t' = 1,\cdots,\Delta_t,
	\text{ s.t. reject }
	\hat{f}_{j} \nrightarrow_{\epsilon} \hat{f}_t
	\text{ at time } t + t'
	\right)
	\\
	= &
	\Pr\left(
	\hat{G}_t \ne \emptyset,
	\exists t' = 1,\cdots,\Delta_t,
	\text{ s.t. reject }
	\hat{f}_{\min \hat{G}_t} \nrightarrow_{\epsilon} \hat{f}_t
	\text{ at time } t + t'
	\right)\\
	\le &
	E\left[
	\Pr\left(
	\hat{G}_t \ne \emptyset,
	\exists t' = 1,\cdots,\Delta_t,
	\text{ s.t. reject }
	\hat{f}_{\min \hat{G}_t} \nrightarrow_{\epsilon} \hat{f}_t
	\text{ at time } t + t'
	\mid \tilde{\mathcal{F}}_t
	\right)
	\right]\\
	\le & E\left[
	\hat{\alpha}_t \mathbbm{1}\left\{
	\hat{G}_t \ne \emptyset
	\right\}
	\right].
	\end{align}
	Summing together the probabilities within the window, we have
	\begin{align}
	\BAP_W(T)\le
	E\left[
	\sum_{1 \vee (T - W)}^T
	\hat{\alpha}_t
	\right]
	\le \alpha,
	\end{align}
	where the last inequality follows from the fact that $\hat{\alpha}_t$ is always selected such that
	$$
	\sum_{1 \vee (T - W)}^T
	\hat{\alpha}_t \le \alpha.
	$$
\end{proof}

\begin{theorem}
	aACP-BABR achieves uniform control of $\meBAR_{W}(\cdot)$ and $\meBSR_W(\cdot)$ at levels $\alpha$ and $\alpha'$, respectively, i.e.
	\begin{align}
	\meBAR_{W}(T) \le \alpha & \qquad \forall T = 1,2,\cdots\\
	\meBSR_{W}(T) \le \alpha' & \qquad \forall T = 1,2,\cdots
	\label{eq:mebsr}
	\end{align}
	\label{thrm:basr}
\end{theorem}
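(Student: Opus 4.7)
The plan is to mirror the proof of Theorem~\ref{thrm:bac} by handling the $\meBAR_W$ and $\meBSR_W$ bounds in parallel. The central observation that makes this tractable is that the denominator of $\widehat{\BAR}_W(T)$ is identical to the denominator of $\meBAR_W(T)$, and likewise for $\widehat{\BSR}_W(T)$ and $\meBSR_W(T)$, and that these denominators are $\tilde{\mathcal{F}}_T$-measurable. So the pointwise construction-level bounds $\widehat{\BAR}_W(T) \le \alpha$ and $\widehat{\BSR}_W(T) \le \alpha'$ can be converted into bounds on the ratios of expectations simply by multiplying through by the denominator and taking expectations. This is precisely the reason for working with the modified ratios instead of the raw ratios $\BAR_W$ and $\BSR_W$: there is no Jensen-type obstruction, and no dependence structure between numerator and denominator to untangle.

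For $\meBAR_W(T) \le \alpha$, I would first reuse the gatekeeping argument of Theorem~\ref{thrm:bac} essentially verbatim to bound the expected bad-approval count in the $W$-window ending at $T$ by $E\left[\sum_{t'=1}^T \hat{\alpha}_{t'} \mathbbm{1}\{T - W \le t' + \Delta_{t'} \le T\}\right]$. This is exactly the expected numerator of $\widehat{\BAR}_W(T)$. By the construction \eqref{eq:bar_control}, this numerator is, for every realization, no larger than $\alpha$ times the denominator $1 + \sum_{t'} \mathbbm{1}\{\hat{B}_{t'} \ne \hat{B}_{t'-1}\}$. Taking expectations of both sides of this pointwise inequality yields $E[\text{bad approvals in window}] \le \alpha \cdot E[1 + \text{benchmarks in window}]$, which is precisely $\meBAR_W(T) \le \alpha$.

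For $\meBSR_W(T) \le \alpha'$ the structure is identical but substitutes the superiority-test families at levels $\hat{\alpha}'_{t'}$ for the acceptability-test families. A bad benchmark event at time $t$ forces a false rejection in the gatekeeping sequence of superiority tests comparing $\hat{f}_t$ to models indexed by $\hat{B}_{t-1}, \ldots, t-1$; as in Theorem~\ref{thrm:bac}, the conditional probability of such an event given $\tilde{\mathcal{F}}_{t'}$ is bounded by $\hat{\alpha}'_{t'}$ on the event that a true null exists, using transitivity of the superiority relation under the constant-distribution assumption to identify the first true null in the sequence. Summing over $t'$ whose superiority families can produce a decision inside the window recovers the expected numerator of $\widehat{\BSR}_W(T)$, and \eqref{eq:bsr_control} combined with the same multiply-and-expect step closes the bound.

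The main obstacle, and the place that needs the most care, is the bookkeeping between the indicator $\mathbbm{1}\{T - W \le t' + \Delta_{t'} \le T\}$ used in the estimator and the actual times inside $[1 \vee (T - W), T]$ at which a bad approval or benchmark could materialize. Because a group-sequential test launched at $t'$ may terminate anywhere in $[t', t' + \Delta_{t'}]$, a single $W$-window constraint evaluated only at $T$ could miss contributions from families whose decision lands inside the window but whose $t' + \Delta_{t'}$ lies outside it; this is exactly why the alpha-investing rule imposes \eqref{eq:bar_control} and \eqref{eq:bsr_control} for \emph{every} width $j = 1, \ldots, W$ rather than only at $j = W$. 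Checking that this family of constraints does dominate every sub-window and every future time $T' > t$ that could inherit the families launched at $t$ is the key routine verification; once that is in place, the two bounds follow by the expectation-inequality argument sketched above.
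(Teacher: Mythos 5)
Your proposal is correct and follows essentially the same route as the paper: bound the windowed bad-approval (resp.\ bad-benchmark) count by the gatekeeping argument of Theorem~\ref{thrm:bac}, invoke the pointwise constraints \eqref{eq:bar_control} and \eqref{eq:bsr_control} to dominate that sum by $\alpha$ (resp.\ $\alpha'$) times the benchmark-count denominator, and take expectations of the resulting pointwise inequality to conclude for the modified (ratio-of-expectations) error rates. Your closing remark about why the constraints must hold for every window width $j=1,\dots,W$ matches the paper's condition \eqref{eq:fdr_constraint} and is the right bookkeeping to flag.
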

\begin{proof}
	For all $T$, $\hat{\alpha}_T$ is selected such that
	\begin{align}
	\hat{\BAR}_{W'}(T) =
	\frac{
		\sum_{t =1}^T
		\hat{\alpha}_t
		\mathbbm{1}\left\{
		t  - W \le t' + \Delta_{t'} \le t
		\right\}
	}{
		1 + \sum_{t =1 \vee (T - W')}^{T-1}
		\mathbbm{1} \left\{
		\hat{B}_{t - 1} \ne \hat{B}_t
		\right\}
	}
	\le \alpha
	\qquad \forall W' = 1,...,W.
	\label{eq:fdr_constraint}
	\end{align}
	Note that we can always set $\hat{\alpha}_T = 0$ to satisfy these constraints, assuming that \eqref{eq:bar_control} was satisfied at times $t = 1,...,T - 1$.
	Using the result in the proof of Theorem~\ref{thrm:bac}, we then bound the numerator of $\BAR_{W}(T)$ as follows
	\begin{align}
	& E\left[
	\sum_{t =1 \vee (T - W)}^T
	\mathbbm{1}\left\{
	\exists t' = 1,...,t - 1 \text{ s.t. }
	\hat{f}_{\hat{A}_{t'}}
	\nrightarrow_{\epsilon, t}
	\hat{f}_{\hat{A}_{t}}
	\right\}
	\right] \\
	& \le E\left[
	\sum_{t =1}^T
	\hat{\alpha}_t
	\mathbbm{1}\left\{
	t  - W \le t' + \Delta_{t'} \le t
	\right\}
	\right ]\\
	& \le E\left[
	\alpha\left(
	1 + \sum_{t =1 \vee (T - W)}^T
	\mathbbm{1} \left\{
	\hat{B}_{t - 1} \ne \hat{B}_t
	\right\}
	\right)
	\right],
	\end{align}
	where the last line follows from \eqref{eq:fdr_constraint}.
	Rearranging, we get that $\meBAR_{W}(T) \le \alpha$.
	The proof for uniform control of $\meBSR_{W}(\cdot)$ is essentially the same, where we replace the alpha-spending function with $\alpha'_t$ and the threshold with $\alpha'$.
\end{proof}

\subsection{Simulation settings}
We ran 50 replicates for each simulation.
\subsubsection{Incremental deterioration}
We set total time $T = 200$ and the maximum wait time $\Delta = 5$ for all models.
The number of new monitoring observations at each time point increments by ten to estimate the true performance difference with increasing precision over time, starting with 200 observations.

\subsubsection{Periodic model deterioration and improvement}
We set total time $T = 100$ and maximum wait time $\Delta = 5$ for all models.
We accumulate 200 new observations at each time point.

\subsubsection{Accumulating data}
Each patient is represented by 30 covariates and the true outcome is generated using a logistic model.
The developer performs logistic regression with a lasso penalty and tunes the penalty parameter using cross-validation.
To increase the margin of model improvement at later time points and the ability to detect small improvements, we increase the number of training observations at each time point by five, starting with size 20, and use a larger wait time of $\Delta = 10$.
The total time is $T = 40$ since the model performance plateaus quickly.

\subsubsection{Significant model improvements}
In order to make the model improvements significant with high probability, we accumulate 650 observations at each time point, which is more than the other simulation settings.
Since large improvements are relatively rare, we used a short total time of $T = 20$.
Since a company is likely more confident in these improvements, the maximum wait time is set to $\Delta = 3$.

\subsubsection{Time trends}
The total time is $T = 100$ and the wait time is $\Delta = 5$.
We accumulate 300 new observations at each time point.

\begin{algorithm}
	\SetAlgoLined
	\For{$t = 1 , 2, ...$}{
		$\hat{A}_{t} = \hat{A}_{t - 1}$\;
		\tcc{Determine if there are new approvals}
		\For{$j = \hat{A}_{t - 1} + 1, ..., t - 1 $}{
			\If{$t \le j + \Delta_j$ \tcp*{If $\hat{f}_{j}$ is under consideration for approval}}{
				Run $\epsilon$-acceptability tests: Test null hypotheses $\hat{f}_{j'} \nrightarrow_{\epsilon} \hat{f}_{j}$ for $j' = \hat{A}_{1}, ..., \hat{A}_{j - 1}, \hat{A}_{j - 1} + 1,...,\hat{A}_{t-1}$ with critical value $c_j(t)$ in gatekeeping style\;
				\If{All $\epsilon$-acceptability tests pass}{
					$\hat{A}_{t} = j$\;
				}
			}
		}
		\tcc{Launch new hypothesis tests for new model proposal}
		Launch family of $\epsilon$-acceptability tests with null hypotheses $\hat{f}_{j} \nrightarrow_{\epsilon} \hat{f}_{t}$ for $j = \hat{A}_{1}, ... \hat{A}_t, \hat{A}_t + 1, ..., t - 1$\;
		Choose $\hat{\alpha}_t$ such that \eqref{eq:bap_control} is satisfied\;
		Select alpha-spending function for $\hat{\alpha}_t$ and its critical value function $c_t(\cdot)$ over the next $\Delta_t$ time points.\;
		
	}
	\caption{aACP-BAC}
	\label{algo:aacp_count}
\end{algorithm}

\begin{algorithm}
	\SetAlgoLined
	\For{$t = 1 , 2, ...$}{
		$\hat{A}_{t} = \hat{A}_{t - 1}$\;
		\tcc{Determine if there are new approvals}
		\For{$j = \hat{A}_{t - 1} + 1, ..., t - 1 $}{
			\If{$t \le j + \Delta_j$ \tcp*{If $\hat{f}_{j}$ is under consideration for approval}}{
				Run $\epsilon$-acceptability tests: Test null hypotheses $\hat{f}_{j'} \nrightarrow_{\epsilon} \hat{f}_{j}$ for $j' = \hat{A}_{j}, ..., \hat{A}_{t - 1}$ with critical value $c_j(t)$ in gatekeeping style\;
				\If{All $\epsilon$-acceptability tests pass}{
					$\hat{A}_{t} = j$\;
				}
			}
		}
		$\hat{B}_{t} = \hat{B}_{t - 1}$\;
		\tcc{Determine if there are new benchmarks}
		\For{$j = \hat{A}_{1}, ..., \hat{A}_{t-1} $}{
			\If{$j > \hat{B}_{t-1}$ and $t \le j + \Delta_j$ \tcp*{If $\hat{f}_{j}$ is under consideration for approval}}{
				Run superiority tests: Test null hypotheses $\hat{f}_{j'} \nrightarrow_{0} \hat{f}_{j}$ for $j' = \hat{B}_{j}, ..., \hat{B}_{t - 1}$ with critical value $c'_j(t)$ in gatekeeping style\;
				\If{All superiority tests pass}{
					$\hat{A}_{t} = j$\;
				}
			}
		}
		\tcc{Launch new hypothesis tests for new model proposal}
		Launch family of $\epsilon$-acceptability tests with null hypotheses $\hat{f}_{j} \nrightarrow_{\epsilon} \hat{f}_{t}$ for $j = \hat{A}_{1}, ... \hat{A}_t, \hat{A}_t + 1, ..., t - 1$\;
		Launch family of superiority tests with null hypotheses $\hat{f}_{j} \nrightarrow_{\epsilon} \hat{f}_{t}$ for $j = \hat{B}_{t}, ..., t - 1$\;
		Choose $\hat{\alpha}_t, \hat{\alpha}_t'$ such that \eqref{eq:bar_control} and \eqref{eq:bsr_control} are satisfied\;
		Select alpha-spending function for $\hat{\alpha}_t$ and $\hat{\alpha}_t'$ and their critical value functions $c_t(\cdot), c_t'(\cdot)$ over the next $\Delta_t$ time points.\;
		
	}
	\caption{aACP-BABR}
	\label{algo:aacp_ratio}
\end{algorithm}

\pagebreak
\begin{landscape}
\begin{table}
	\begin{tabular}{l|rHHrHH|rr|rr|rr}
		\toprule
		{} &    &   &  & &  & &  \multicolumn{2}{c}{Final} &  \multicolumn{2}{c}{Cumulative utility} &  &  \\
		approver &  $BAC_W$ &  num\_bad\_approv\_tot &  num\_bad\_std &  \# approved &  num\_approved\_window &  num\_std\_window &  Specificity &  Sensitivity &  Specificity &  Sensitivity &  $\meBAR_{W}$ &  $\meBSR_{W}$ \\
		\midrule
		\multicolumn{13}{c}{\textit{Incremental model deterioration, Section~\ref{sec:model_deteriorate}}}\\
		BABR     &             0.000 &               0.000 &        0.000 &             1.940 &                1.020 &           1.000 &              0.782 &              0.777 &                   0.784 &                   0.784 &  0.000 &  0.000 \\
		BAC      &             0.000 &               0.000 &        0.000 &             1.960 &                1.000 &           1.000 &              0.781 &              0.778 &                   0.783 &                   0.784 &  0.000 &  0.000 \\
		Baseline &             0.060 &               0.120 &        0.000 &             2.220 &                1.160 &           1.000 &              0.778 &              0.779 &                   0.781 &                   0.781 &  0.060 &  0.000 \\
		Fixed    &             0.000 &               0.000 &        0.000 &             1.000 &                1.000 &           1.000 &              0.787 &              0.788 &                   0.787 &                   0.788 &  0.000 &  0.000 \\
		Reset    &             1.180 &               7.780 &        1.180 &             9.860 &                1.180 &           1.180 &              0.719 &              0.715 &                   0.763 &                   0.761 &  0.541 &  0.541 \\
		\midrule
		\multicolumn{13}{c}{\textit{Periodic model deterioration/improvement, Section~\ref{sec:period}}}\\
		BABR     &             0.000 &               0.000 &        0.000 &             2.920 &                2.360 &           1.800 &              0.799 &              0.799 &                   0.786 &                   0.787 &  0.000 &  0.000 \\
		BAC      &             0.000 &               0.000 &        0.000 &             2.940 &                2.320 &           1.000 &              0.799 &              0.799 &                   0.786 &                   0.787 &  0.000 &  0.000 \\
		Baseline &             6.300 &              24.640 &        0.000 &            43.360 &               10.080 &           1.000 &              0.749 &              0.749 &                   0.765 &                   0.766 &  6.300 &  0.000 \\
		Blind    &            15.000 &              66.000 &        0.000 &            99.000 &               15.000 &           1.000 &              0.712 &              0.711 &                   0.762 &                   0.762 & 15.000 &  0.000 \\
		Fixed    &             0.000 &               0.000 &        0.000 &             1.000 &                1.000 &           1.000 &              0.697 &              0.697 &                   0.697 &                   0.697 &  0.000 &  0.000 \\
		Reset    &             0.020 &               0.020 &        0.020 &             3.740 &                2.860 &           2.860 &              0.801 &              0.801 &                   0.790 &                   0.791 &  0.020 &  0.020 \\
		\midrule
		\multicolumn{13}{c}{\textit{Training on accumulating data, Section~\ref{sec:accum}}}\\
		BABR     &             0.040 &               0.040 &        0.000 &             3.460 &                2.120 &           1.560 &              0.955 &              0.954 &                   0.858 &                   0.825 &  0.025 &  0.000 \\
		BAC      &             0.020 &               0.020 &        0.000 &             3.240 &                2.080 &           1.000 &              0.954 &              0.952 &                   0.858 &                   0.825 &  0.020 &  0.000 \\
		Baseline &             0.100 &               0.120 &        0.000 &            22.120 &               11.080 &           1.000 &              0.958 &              0.958 &                   0.884 &                   0.845 &  0.100 &  0.000 \\
		Blind    &             3.840 &               3.960 &        0.000 &            39.000 &               15.000 &           1.000 &              0.958 &              0.958 &                   0.928 &                   0.922 &  3.840 &  0.000 \\
		Fixed    &             0.000 &               0.000 &        0.000 &             1.000 &                1.000 &           1.000 &              0.688 &              0.574 &                   0.688 &                   0.574 &  0.000 &  0.000 \\
		Reset    &             0.080 &               0.080 &        0.380 &             4.520 &                2.620 &           2.620 &              0.956 &              0.956 &                   0.879 &                   0.840 &  0.028 &  0.168 \\
		\midrule
		\multicolumn{13}{c}{\textit{Significant model improvement, Section~\ref{sec:nice}}}\\
		BABR     &             0.000 &               0.000 &        0.000 &             4.020 &                3.900 &           2.720 &              0.802 &              0.802 &                   0.757 &                   0.757 &  0.000 &  0.000 \\
		BAC      &             0.000 &               0.000 &        0.000 &             3.980 &                3.800 &           1.000 &              0.801 &              0.801 &                   0.753 &                   0.753 &  0.000 &  0.000 \\
		Baseline &             0.000 &               0.000 &        0.000 &            17.160 &               14.760 &           1.000 &              0.803 &              0.803 &                   0.778 &                   0.779 &  0.000 &  0.000 \\
		Blind    &             0.000 &               0.000 &        0.000 &            19.000 &               15.000 &           1.000 &              0.803 &              0.803 &                   0.790 &                   0.790 &  0.000 &  0.000 \\
		Fixed    &             0.000 &               0.000 &        0.000 &             1.000 &                1.000 &           1.000 &              0.681 &              0.682 &                   0.682 &                   0.682 &  0.000 &  0.000 \\
		Reset    &             0.000 &               0.000 &        0.020 &             4.420 &                4.200 &           4.200 &              0.802 &              0.802 &                   0.770 &                   0.771 &  0.000 &  0.007 \\
		\midrule
		\multicolumn{13}{c}{\textit{Time trend experiment, Section~\ref{sec:sim_time_trend}}}\\
		no-trend-BAC         &             0.020 &               0.000 &        0.000 &             1.020 &                1.020 &           1.000 &              0.712 &              0.712 &                   0.712 &                   0.712 &  0.000 &  0.000 \\
		no-trend-Fixed       &             0.000 &               0.000 &        0.000 &             1.000 &                1.000 &           1.000 &              0.712 &              0.712 &                   0.712 &                   0.712 &  0.000 &  0.000 \\
		graph-constant-BAC   &             0.020 &               0.020 &        0.000 &             1.020 &                1.000 &           1.000 &              0.651 &              0.651 &                   0.712 &                   0.712 &  0.020 &  0.000 \\
		graph-constant-Fixed &             0.000 &               0.000 &        0.000 &             1.000 &                1.000 &           1.000 &              0.651 &              0.652 &                   0.712 &                   0.712 &  0.000 &  0.000 \\
		graph-changing-BAC   &             --- &               0.060 &        0.000 &             1.820 &                1.140 &           1.000 &              0.632 &              0.633 &                   0.687 &                   0.687 &  --- &  --- \\
		graph-changing-Fixed &             --- &               0.000 &        0.000 &             1.000 &                1.000 &           1.000 &              0.772 &              0.773 &                   0.711 &                   0.712 &  --- &  --- \\
		\bottomrule
	\end{tabular}
	\caption{
		Comparison of aACP in different simulation settings.
		Columns $\BAP_W, \meBAR_{W}, \meBSR_{W}$ display the maximum error rate over all time points.
		In the incremental model deterioration, we omit aACP-Blind since it always converges to completely uninformative classifier.
		In the time trend experiments, we omit results from aACP-BABR because they are very similar to aACP-BAC.
	}
	\label{table:all}
\end{table}
\end{landscape}

\bibliographystyle{biom}
\bibliography{main}

\end{document}